\newtheorem{lemma}{Lemma}
\definecolor{darkgreen}{rgb}{0.0,0.5,0.0}
\definecolor{amber}{rgb}{1.0, 0.75, 0.0}
\definecolor{darkgreen}{rgb}{0.0,0.5,0.0}
\definecolor{amber}{rgb}{1.0, 0.75, 0.0}
\DeclareMathOperator{\diag}{diag}
\DeclareMathOperator{\spann}{span}
\DeclareMathOperator{\nnz}{nnz}
\DeclareMathOperator{\rank}{rank}
\DeclareMathOperator{\trace}{trace}
\title{Semi-Supervised Classification on Non-Sparse~Graphs
Using Low-Rank Graph~Convolutional~Networks}
\author{
	Dominik Alfke and Martin Stoll
	\\ Department of Mathematics, Chair of Scientific Computing
\\ Chemnitz University of Technology, Germany
	\\ \texttt{\{dominik.alfke, martin.stoll\}@math.tu-chemnitz.de}
}
\begin{document}

\maketitle

\begin{abstract}
Graph Convolutional Networks (GCNs) have proven to be successful tools for semi-supervised learning on graph-based datasets. For sparse graphs, linear and polynomial filter functions have yielded impressive results. For large non-sparse graphs, however, network training and evaluation becomes prohibitively expensive.
By introducing 
\emph{low-rank filters},
we gain significant runtime acceleration and simultaneously improved accuracy. We further propose an architecture change mimicking techniques from Model Order Reduction in what we call a \emph{reduced-order GCN}. Moreover, we present how our method can also be applied to \emph{hypergraph} datasets and how hypergraph convolution can be implemented efficiently.
\end{abstract}

\section{Introduction}
\label{sec:introduction}

Data science provides a multitude of approaches for making predictions within the enormous amounts of data produced in the digital age.
In settings where unsupervised learning is unreliable and supervised learning requires too much manual input, semi-supervised learning (SSL) aims at merging the best of both worlds. Effective methods may benefit from both a small set of training data and clustering information extracted from a vast amount of unlabeled data.
In this context, Graph Convolutional Networks (GCNs) are a type of neural network designed to exploit the graph-like structure present in many SSL applications.
Since their introduction \cite{bruna14}, extension, and adaptation \cite{defferard16,kipf17}, GCNs have been used to produce impressive results in various domains \cite{wu19}.

A crucial design choice in each GCN architecture is the filter function space. While the original theory allows for arbitrary functions, most practical applications employ a low-dimensional space of low-order polynomials. For sparse graphs, where each node is only connected to a small number of neighbors, these polynomials have desirable qualities that make other functions unappealing \cite{defferard16}. For dense or fully-connected graphs on the other hand, the polynomial-based convolutional operation becomes too expensive to evaluate, rendering polynomial filters impractical.
One type of such graphs with non-sparse structure are hypergraphs \cite{bretto13}, which occur naturally in many data science applications, e.g., those containing categorical data  \cite{zhou06,hein13,gao13,bosch16,purkait17}. GCNs have been employed on hypergraphs, e.g., in \cite{feng19,yadati19}.

In order to accelerate the convolutional operation with non-sparse graphs,
we employ a low-rank approach inspired by Model Order Reduction \cite{antoulas05}, a technique designed to accelerate algorithms without sacrificing too much accuracy. In addition to drastically improving training times, our low-rank architecture enables new non-polynomial filters that can even provide more accurate classification results by reinforcing simple design principles.

The rest of this paper is organized as follows. In Section~\ref{sec:gcn}, we introduce our GCN design and point out differences to existing architectures. Section~\ref{sec:hypergraph} describes how GCNs can be efficiently applied to hypergraphs. In Section~\ref{sec:experiments}, we conduct experiments that focus on the comparison of different GCN architectures, showing that low-rank and reduced-order GCNs can indeed produce better results.

\section{Semi-Supervised Learning with Low-Rank Graph Convolutional Networks}
\label{sec:gcn}

\subsection{Problem setting}
\label{sec:gcn:setting}

Suppose that a fixed dataset of $n$ points is given where each data point is associated with a $d$-dimensional feature vector. The goal is to predict an $m$-dimensional output vector for each point based on a small subset of training points for which the desired output is known. In matrix notation, we consider the full feature matrix $X \in \mathbb{R}^{n\times d}$ as input, holding the feature vectors in its rows, and we want to produce a prediction matrix $Y \in \mathbb{R}^{n \times m}$.

A neural network for this task produces a series of layer matrices $X^{(l)} \in \mathbb{R}^{n \times N_l}$ for $l=0,\ldots,L$.  Throughout the network, the number of rows in each layer matrix must not change since the $i$-th row always refers to the $i$-th data point, but the numbers of columns $N_0,\ldots,N_L$ can vary. We first set $X^{(0)}=X$ and then iteratively compute the next layer matrix by applying a parametric layer operation to the previous one. The last layer matrix $X^{(L)}$ is the network output which can be used to produce the matrix of predicted values for each data point, $Y = \Psi(X^{(L)})$, by applying a prediction function $\Psi : \mathbb{R}^{N_L} \to \mathbb{R}^m$ to each row. $X^{(1)}$ through $X^{(L-1)}$ are also called \emph{hidden} layer matrices. 

One popular application of this structure is data point classification, where each data point is to be assigned to one of $m$ classes. This is achieved by choosing $\Psi$ as the softmax function.

\subsection{Graph convolutional networks}
\label{sec:gcn:gcn}

In some semi-supervised learning applications, the dataset may come with additional information on special connections between data points. This adjacency information describes a graph and is exploited in Graph Convolutional Networks (GCN).
There, the layer operation performs a \emph{graph convolution} on the layer matrices \cite{shuman13,bruna14,kipf17}. The network learns spectral filter functions from a predefined function space to determine how the next layer matrix depends on the previous one.

Like many methods from graph theory, GCN rely on the graph Laplacian matrix to describe the spectral properties of a graph \cite{luxburg07}. It is defined using the weighted adjacency matrix $W$, i.e., the matrix where $W_{ij}$ holds 
the strength of the connection between the $i$-th and $j$-th node, where $W_{ij}=0$ represents no connection.
The degree matrix $D$ is the diagonal matrix where $D_{ii}$ holds the sum of the weights of all edges connected to the $i$-th node. Based on these and the identity matrix $I$, the normalized variant of the graph Laplacian operator is defined as
\begin{equation}
\label{eq:glo}
\mathcal{L} = I - D^{-1/2} W D^{-1/2}.
\end{equation}
Let $\mathcal{L} = U \Lambda U^T$ be an eigenvalue decomposition of this matrix,
i.e., $\Lambda$ is a diagonal matrix holding the eigenvalues $\lambda_1,\ldots,\lambda_n$, and $U$ holds the corresponding eigenvectors in its columns.

Suppose that $K$ real-valued functions $\varphi_1, \ldots, \varphi_K$ are given, spanning a $K$-dimensional spectral filter space. 
Since the eigenvalues of $\mathcal{L}$ are restricted to $[0,2)$ \cite{luxburg07}, the functions only need to be defined on that interval.
Then the GCN layer operation \cite{kipf17} is defined as
\begin{equation} \label{eq:gcn_full_op}
	X^{(l)} = \sigma_{l}\left(\sum_{k=1}^K \mathcal{K}^{(k)} X^{(l-1)} \Theta^{(k,l)}\right) 
	\quad \text{with} \quad 
\mathcal{K}^{(k)} = U \varphi_k(\Lambda) U^T.
\end{equation}
Here, $\varphi_k(\Lambda)$ denotes the diagonal matrix holding the functions values of $\varphi_k$ in the diagonal elements of $\Lambda$.
The \emph{kernel matrices} $\mathcal{K}^{(k)} \in \mathbb{R}^{n\times n}$ only depend on the graph and the filter function space. The \emph{weight matrices} $\Theta^{(k,l)} \in \mathbb{R}^{N_{l-1}\times N_{l}}$ are the network parameters to be determined in training. $\sigma_{l} : \mathbb{R} \to \mathbb{R}$ are the activation functions of each layer.

\subsection{Choice of the spectral filter space}
\label{sec:gcn:basis}

When designing a GCN, arguably the most important parameter is the number and shape of the basis functions for
$\spann\{ \varphi_1,\ldots,\varphi_K \}$,
the space of permitted spectral filter functions. While the choice of the dimension $K$ mainly comes down to a trade-off between flexibility and efficiency (where $K=1$ is desirable if a single basis function already incorporates the desired behaviour), we can formulate multiple design principles to be observed in the choice of the $\varphi_k$:
\begin{enumerate}[leftmargin=1.5cm]
\item[(DP1)] Efficient setup of $\mathcal{K}^{(k)} = U \varphi_k(\Lambda) U^T$, preferably without actually having to compute the full eigenvalue decomposition of $\mathcal{L}$.
\item[(DP2)] Efficient evaluation of matrix products $\mathcal{K}^{(k)} X$. 
\item[(DP3)] $|\varphi_k(\lambda)|$ is large if $\lambda$ is small but nonzero, and small if $\lambda$ is large.
\end{enumerate}
The last goal is based on the observation that premultiplication with $\mathcal{K}^{(k)}$ amplifies vectors in the direction of an eigenvector $u$ by factor $|\varphi_k(\lambda)|$, where $\lambda$ is the corresponding eigenvalue. Graph theory tells us that an eigenvector $u$ contains clustering information if its eigenvalue $\lambda$ is small but nonzero, or noise if $\lambda$ is large \cite{luxburg07}. Here, clustering means that the $i$-th and $j$-th components of the vector are similar if and only if nodes $i$ and $j$ have a strong connection in the dataset graph. Semi-supervised learning applications naturally call for 
amplification of such clustering vectors and damping of noise.
This can be achieved by following (DP3).

Note that these design principles only give a preliminary heuristic on what constitutes a ``good'' basis function. The choice leading to the best final results may very much depend on the specific dataset and can hardly be predicted a priori.

\subsubsection{Polynomial basis functions for full-rank kernel matrices}
\label{sec:gcn:basis:fullrank}

The first design principle can be observed most easily by using polynomial basis functions, which avoid eigenvalue computation altogether. A simple example following (DP3) can be $\varphi(\lambda) = 1 - \frac{\lambda}{\lambda_n}$, where the largest eigenvalue $\lambda_n$ can be computed numerically or simply estimated as a suitable value between 1 and 2. The resulting kernel matrix is $\mathcal{K}= I - \frac{1}{\lambda_n} \mathcal{L}$. 

Affine linear functions are especially suitable if $\mathcal{L}$ is sparse, since its sparsity pattern will be maintained in $\mathcal{K}$. This is in line with (DP2). Higher-order polynomials act similarly, but increase the number of non-zeros. 
Generally, it is possible to choose $\varphi_1,\ldots,\varphi_K$ as a basis for the function space of polynomials with degree up to $K-1$. With sparse graphs, this offers a straightforward interpretation of $K$ as a localization parameter, as the $i$-th entry $\mathcal{K}_k x$ only depends on the values of $x$ in nodes connected to $i$ by a path of length $K$ \cite{defferard16}.

For non-sparse graphs, however, the resulting full-rank kernel matrix may cause expensive layer operations, violating (DP2). In that case, polynomial filters are intrinsically less appealing.

\subsubsection{Low-rank basis functions for dense Laplacians}
\label{sec:gcn:basis:lowrank}

As opposed to polynomials, we would like to introduce the basis function
\begin{equation} \label{eq:pseudoinverse}
\varphi(\lambda) = \begin{cases} 0 & \text{if } \lambda = 0, \\ \frac{1}{\lambda} & \text{if } \lambda > 0, \end{cases}
\end{equation}
producing the kernel matrix $\mathcal{K} = \mathcal{L}^\dagger$, which denotes the Moore-Penrose pseudoinverse of $\mathcal{L}$ \cite{golubvanloan}. While this choice fits the third design principle particularly well, it violates the two others because we would either have to compute the explicit (non-sparse) pseudoinverse beforehand or solve linear systems of equations in each network evaluation and training step, which is far too expensive.

One way to overcome these problems is to replace $\mathcal{K}$ by its best low-rank approximation.
This is a basic technique in Numerical Linear Algebra \cite{golubvanloan} and Model Order Reduction \cite{antoulas05}, but in the convolutional setting they can be motivated in a particularly straightforward way.
Consider an arbitrary basis function $\varphi$. Given a target rank $r$, we can define a second basis function
\begin{equation*}
\tilde{\varphi}(\lambda) = \begin{cases} \varphi(\lambda) & \text{if $\lambda$ is one of the $r$ eigenvalues 
for which $|\varphi(\lambda_i)|$ is largest,
} \\ 0 & \text{else.} \end{cases}
\end{equation*}
This means that $\tilde{\varphi}$ is equal to $\varphi$ on the $r$ ``dominant'' eigenvalues (with respect to $\varphi$), but $0$ on all others. As a result, we see that the kernel matrix of $\tilde{\varphi}$ is
\[
	\tilde{\mathcal{K}} = U \tilde{\varphi}(\Lambda) U^T = U_r \tilde{\varphi}(\Lambda_r) U_r^T = U_r \varphi(\Lambda_r) U_r^T \approx \mathcal{K},
\]
where $\Lambda_r \in \mathbb{R}^{r\times r}$ denotes the diagonal matrix of only these $r$ dominant eigenvalues, and $U_r \in \mathbb{R}^{n \times r}$ holds the corresponding eigenvectors in its columns. 
It can be shown that $\tilde{\mathcal{K}}$ is indeed the best rank-$r$ approximation to $\mathcal{K}$, cf. Lemma~1 in the appendix.

Using low-rank kernel matrices has three major upsides.
First, consider a non-sparse $\mathcal{L}$ such that polynomial basis functions lead to inefficient $\mathcal{K}$, violating (DP2). Then matrix products with the approximation $\tilde{\mathcal{K}}$ are much cheaper to evaluate, reducing the asymptotic number of multiplications from $n^2$ to $2nr$ per column.
Second, consider a non-polynomial basis function like the pseudo-inverse from \eqref{eq:pseudoinverse}. Setting up $\mathcal{K}$ requires a full eigenvalue decomposition of $\mathcal{L}$, violating (DP1). Setting up $\tilde{\mathcal{K}}$ on the other hand requires only a small number of eigenpairs, e.g., the smallest $r$ eigenvalues (depending on the shape of $\varphi$). This task can be solved with efficient numerical methods, making these non-polynomial functions feasible in the first place. Third, if $\varphi$ already follows (DP3), the dominant eigenvalues are those with clustering information. Setting $\tilde{\varphi}(\lambda) = 0$ for all non-clustering eigenvalues reinforces this design principle, damping noise to zero.

If all filter basis functions $\varphi_1,\ldots,\varphi_K$ of a GCN originate in low-rank approximations with a shared set of $r$ dominant eigenvalues, we will call the network a \emph{low-rank GCN} of rank $r$. Using the notation with $\Lambda_r$ and $U_r$ as before, the convolutional layer operation from \eqref{eq:gcn_full_op} can be written as
\begin{equation} \label{eq:gcn_lowrank_op}
X^{(l)} = \sigma_l \left( U_r \sum_{k=1}^K \varphi_k(\Lambda_r) U_r^T X^{(l-1)} \Theta^{(l,k)} \right).
\end{equation}

\subsection{Spectral activation and reduced-order networks}
\label{sec:gcn:reduced}

In the context of low-rank GCNs from the previous section, we introduce a small but potentially powerful architecture change that handles activation in the spectral instead of the spatial domain. We achieve that by applying $\sigma_l$ \emph{before} premultiplication with $U_r$ in \eqref{eq:gcn_lowrank_op}. The resulting layer operation is
\begin{equation} \label{eq:layer_op_spectral_activation}
X^{(l)} =  U_r \sigma_l\left( \sum_{k=1}^K\varphi_k(\Lambda_r) U_r^T X^{(l-1)} \Theta^{(l, k)} \right).
\end{equation}
On its own, this change may look arbitrary. However, since $\mathcal{L}$ is symmetric, the eigenvectors are orthogonal to each other, i.e., $U_r^T U_r = I$. Consider two sequential convolutional layers. Instead of multiplying with $U_r$ as the last step of the first layer and immediately multiplying with $U_r^T$ at the beginning of the second, we can now avoid these multiplications altogether.

This results in a network structure where we never have to compute the layer matrices, but instead work with their spectral transformations $\hat{X}^{(l)} = U_r^T X^{(l)} \in \mathbb{R}^{r \times N_l}$. After an initial \emph{reduction} step to compute $\hat{X}^{(0)}$, all layer operations are now significantly more efficient. The original output matrix can be retrieved through a final \emph{projection} step, $X^{(L)} = U_r \hat{X}^{(L)}$. This approach is reminiscent of classical methods used in Model Order Reduction for ODEs \cite{antoulas05}. The resulting network structure is depicted in Figure~\ref{fig:reduced_GCN}. We will refer to this type of network as \emph{reduced-order graph convolutional networks}.

\begin{figure}
\begin{center}
	\begin{tikzpicture}
	
	\matrix (m) [matrix of math nodes, column sep=-2pt, ampersand replacement=\&] {
		\mathbb{R}^{n \times N_0} \& \to \& \mathbb{R}^{r \times N_0} \& \to \& \cdots \& \to \& \mathbb{R}^{r \times N_{l-1}} \& \to \& \mathbb{R}^{r \times N_{l}} \& \to \& \cdots \& \to \& \mathbb{R}^{r \times N_L} \& \to \& \mathbb{R}^{n \times N_L} \\
		X^{(0)} \& \mapsto \& \hat{X}^{(0)} \& \mapsto \& \cdots \& \mapsto \& \hat{X}^{(l-1)} \& \mapsto \& \hat{X}^{(l)} \& \mapsto \& \cdots \& \mapsto \& \hat{X}^{(L)} \& \mapsto \& X^{(L)} \\
	};
	
	\node[above=0cm of m-1-1] {Input};
	\node[above=0cm of m-1-15] {Output};
	
	\draw[decorate, decoration={brace, amplitude=7pt}] (m-2-9.south east) -- (m-2-7.south west) node[midway, below=7pt] (convdesc) {Convolution (for $l=1,\ldots,L$):};
	
	\node[below=0cm of convdesc, draw=black, rounded corners] {$\displaystyle \hat{X}^{(l)} = \sigma_l\left( \sum_{k=1}^{K} \varphi_k(\Lambda_r) \hat{X}^{(l-1)} \Theta^{(l,k)} \right)$};
	
	\draw[decorate, decoration={brace, amplitude=7pt}] (m-2-3.south east) -- (m-2-1.south west) node[midway, below=7pt] (inputdesc) {Reduction:};
	
	\node[below=0cm of inputdesc, draw=black, rounded corners] {$\displaystyle \hat{X}^{(0)} = U_r^T X^{(0)}$};
	
	\draw[decorate, decoration={brace, amplitude=7pt}] (m-2-15.south east) -- (m-2-13.south west) node[midway, below=7pt] (outputdesc) {Projection:};
	
	\node[below=0cm of outputdesc, draw=black, rounded corners] {$\displaystyle X^{(L)} = U_r \hat{X}^{(L)}$};
	
	\def\dx{0.05}
	\def\dy{0.05}
	\def\n{1.5}
	\def\r{0.5}
	
	\path (m-1-1.base) +(0, 2cm) coordinate (sketchcenter0);
	\path (m-1-3.base) +(0, 2cm) coordinate (sketchcenter1);
	\path (m-1-13.base) +(0, 2cm) coordinate (sketchcenter6);
	\path (m-1-15.base) +(0, 2cm) coordinate (sketchcenter7);
	\foreach \i [count=\j from 1] in {2,...,5} {
		\coordinate (sketchcenter\i) at ($(sketchcenter1)!\j/5.0!(sketchcenter6)$);
	}

	\foreach \Nl/\y [count=\i from 0] in {12/\n, 12/\r, 8/\r, 6/\r, 4/\r, 3/\r, 2/\r, 2/\n} {
		\pgfmathtruncatemacro\NlMinusOne{\Nl-1}
		\foreach \j in {1,...,\NlMinusOne} {
			\pgfmathsetmacro\z{\j - 0.5*\Nl}
			\draw[black!50] (sketchcenter\i)
			++(\z*\dx, 0.5*\y + \z*\dy)
			-- ++(0, -\y);
		}
		\draw[black] (sketchcenter\i) 
		++(0.5*\dx*\Nl, 0.5*\y + 0.5*\dy*\Nl) 
		-- ++(0, -\y) 
		-- ++(-\dx*\Nl, -\dy*\Nl)
		-- ++(0, \y)
		-- cycle;
	}
	
	\foreach \j [count=\i from 1] in {2,...,6} {
		\draw[->, red, thick, shorten >=0.3cm] (sketchcenter\i) -- (sketchcenter\j);
	}
	\draw[->, blue!60, thick, shorten >=0.3cm] (sketchcenter0) -- (sketchcenter1);
	\draw[->, blue!60, thick, shorten >=0.3cm] (sketchcenter6) -- (sketchcenter7);

	\end{tikzpicture}
\end{center}
\caption{Layers of a reduced-order GCN.}
\label{fig:reduced_GCN}
\end{figure}
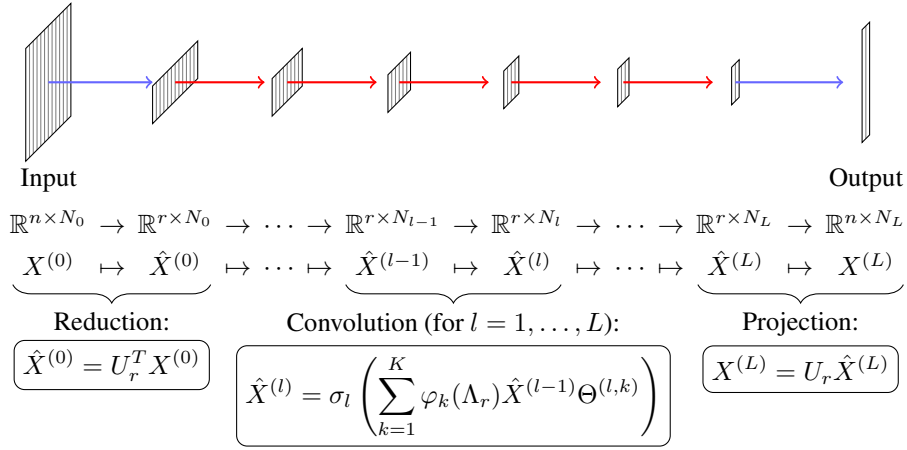

\section{Efficient techniques for GCNs on hypergraphs}
\label{sec:hypergraph}

Hypergraphs can be viewed as a natural generalization of classical graphs \cite{bretto13}. Since the definition of graph convolution only requires the graph Laplacian, GCNs can be generalized to hypergraphs by just specifying a hypergraph Laplacian. This has been proposed in competing ways. Feng et al. \cite{feng19} use the Laplacian definition from \cite{zhou06}, focusing on hypergraph generation instead of additional efficiency gains.
Yadati et al. \cite{yadati19} use evolving matrix representations of a non-linear Laplacian operator. We will follow the former method due to its better compatibility with low-rank filters. 
Where a graph has edges that connect exactly two nodes with each other, a hypergraph has \emph{hyperedges} that represent a connection between any number of nodes. These hyperedges are simply node sets $e \subset \{1,\ldots,n\}$ associated with an arbitrary weight $w_e > 0$. The set $E$ of all hyperedges can be given by its incidence matrix $H \in \mathbb{R}^{n \times |E|}$ with entries $h_{ie} = 1$ if node $i$ is a part of hyperedge $e$, and $h_{ie} = 0$ otherwise. Node degrees can be defined the same way as for classical graphs via $d_i = \sum_{e \in E} h_{ie} w_e$, i.e., the node degree $d_i$ is the sum of weights of all hyperedges that contain $i$. In addition, hypergraphs call for edge degrees that are just defined as the number of nodes contained in a hyperedge, $|e| = \sum_{i = 1}^n h_{ie}$.
We will use $D_V$, $D_E$, and $W_E$ to denote the diagonal matrices holding the node degrees $d_i$, edge degrees $|e|$, and edge weights $w_e$, respectively.

The Laplacian definition introduced in \cite{zhou06} and used in many learning applications \cite{gao13,bosch16} is given by
\begin{equation}
\label{eq:hypergraph:laplacian}
\mathcal{L}= I - D_V^{-1/2} H W_E D_E^{-1} H^T D_V^{-1/2}.
\end{equation}
The $i$-$j$-th entry of $\mathcal{L}$ is nonzero if nodes $i$ and $j$ share membership in at least one hyperedge.
In many applications this will be the case for most pairs of nodes. Hence the hypergraph Laplacian is generally not sparse.
Note that because $D_V^{-1/2} H W_E D_E^{-1} H^T D_V^{-1/2}$ is positive semi-definite, the eigenvalues of $\mathcal{L}$ are restricted to the interval $[0,1]$. 
If $|E| < n$, the largest eigenvalue is 1 with a multiplicity of $n-|E|$.
This knowledge may play a role in the design of filter functions, see Section~\ref{sec:gcn:basis}.

In some applications, the number of hyperedges is as large as $n$ or larger. This may especially be the case for some automatically generated hypergraphs, e.g., as in \cite{feng19}. In that case, the hypergraph Laplacian is just a large dense matrix without any exploitable structure, leading to expensive convolution. Just as with classical graphs, low-rank filters and reduced-order networks can be used to gain huge runtime boosts.

\subsection{Hypergraphs with a small number of hyperedges}
\label{sec:hypergraph:structure}

In the special case that the number of hyperedges is significantly smaller than the number of nodes, $|E| \ll n$, the Laplacian definition directly exhibits a useful structure.
This occurs often if the hyperedges come from real-world categorical properties. 
For automatically generated hypergraphs, there is precedence for the benefits of generating fewer, larger hyperedges \cite{purkait17}.

In this case, the matrix subtracted from the identity in \eqref{eq:hypergraph:laplacian} has rank $|E|$, so $\mathcal{L}$ is a linear combination of the identity and a low-rank matrix. 
The same structure is also exhibited by the kernel matrix $\mathcal{K}$ for any arbitrary filter function $\varphi$. For polynomial filters, the concrete factors in this structure can be computed explicitly from the hypergraph data, while for other filters, a singular value decomposition is required in a precomputation step. Explicit formulas for the structured setup of $\mathcal{K}$ are given in the appendix. In both cases, exploiting the Laplacian structure makes the network setup and training much more efficient.
The full matrix setup can be avoided altogether and the asymptotic number of scalar multiplications required for each column of a matrix product $\mathcal{K} X$ is reduced from $n^2$ to $2n|E|$.

Due to this structure, full-rank non-polynomial filters like the pseudoinverse function \eqref{eq:pseudoinverse} can now be employed efficiently. The runtime advantages of low-rank filters are thus less drastic than for general non-sparse graphs, but they are still apparent if the target rank is smaller than $|E|$.

\section{Experimental results}
\label{sec:experiments}

We will now present the performance of our proposed methods by comparing different GCN architectures. The goal of these experiments is to investigate to what degree low-rank filters can outperform traditional filters and how much the theoretical benefits of the low-rank approach come into play in practice.

All results are produced with code available online\footnote{
		\url{https://www.github.com/dominikalfke/LowRankGCNExperiments}}
based on our \textsc{Julia} implementation,\footnote{
		\url{https://www.github.com/dominikalfke/GCNModel}}
which uses TensorFlow\footnote{\url{https://www.tensorflow.org} and \url{https://www.github.com/malmaud/TensorFlow.jl}} for network training. In all experiments we employ the general architecture from \cite{kipf17} with one hidden layer ($L=2$), a one-dimensional filter function space ($K=1$), the ReLU function for $\sigma_1$, and the identity function for $\sigma_2$. Further details on the implementation, datasets, architecture, parameters, and training process are given in the appendix.

We will compare three different general architectures: The traditional \textbf{GCN} with full-rank filters, its equivalent with low-rank filters as in Section~\ref{sec:gcn:basis:lowrank} (\textbf{L-GCN}), and reduced-order GCN with spectral activation as in Section~\ref{sec:gcn:reduced} (\textbf{R-GCN}). Within these, we will employ three different filter basis functions (or their low-rank approximations, depending on the architecture):  \textbf{Linear} ($\varphi(\lambda) = 1 - \lambda$), \textbf{Quadratic} ($\varphi(\lambda) = (\lambda - 1)^2$), and \textbf{Pseudoinverse} (as in \eqref{eq:pseudoinverse}). All of these follow the third design principle. 
The linear GCN architecture is the one used most often with sparse graphs \cite{kipf17,wu19} and in the hypergraph setup from \cite{feng19}. The only difference is the \emph{re-normalization trick} from \cite{kipf17}, which we do not employ here because we obtained better results without it. 

\subsection{Spiral dataset}

One example of non-sparse graphs are those that do not come directly from real-world data, but are constructed artificially based on node feature vectors $x_i \in \mathbb{R}^d$ and a Gaussian kernel function. The resulting loop-free fully connected graph has the adjacency matrix
\[
W_{ij} = \begin{cases} \exp\left(\frac{-\|x_i-x_j\|^2}{\sigma^2}\right) & \text{if $i\neq j$,} \\ 0 & \text{else,} \end{cases}
\]
with a parameter $\sigma > 0$. We can approximate the smallest eigenvalues of the resulting graph Laplacian using the method presented in \cite{alfke18}. We will use a simple dataset from that paper consisting of $n=10\ 000$ three-dimensional feature vectors from five overlapping orbs. We set $N_0=3$, $N_1=4$, and $N_2=5$. Results are presented in Table~\ref{tbl:spiral}.

\begin{table}
\caption{Results for spiral dataset}
\label{tbl:spiral}
\centering
\begin{tabular}{@{}llccc@{}} \toprule
	\multirow{2}{*}{Network} & \multirow{2}{*}{Filter function} & \multicolumn{2}{c}{Time} & \multirow{2}{*}{Accuracy} \\ \cmidrule(lr){3-4}
	& & Setup & Training & \\ \midrule
	
	\multirow{2}{*}{GCN} 
	& Linear & 3.8 s  & 2087 s & 78.54 \% \\
	& Quadratic & 20.3 s & 1396 s & 73.39 \% \\ 
	\midrule
	
	\multirow{2}{4em}{L-GCN (rank 10)} 
	& Linear & 0.39 s & 3.32 s & 76.09 \% \\ 
	& Quadratic & 0.39 s & 3.39 s & 69.27 \% \\ 
	& \textbf{Pseudoinverse} & 0.44 s & 3.33 s & \textbf{92.23 \%} \\ 
	\midrule 
	\multirow{2}{4em}{R-GCN (rank 10)}
	& Linear & 0.39 s & 2.36 s & 46.02 \% \\
	& Quadratic & 0.39 s & 2.41 s & 32.69 \% \\ 
	& Pseudoinverse & 0.44 s & 2.37 s & 55.41 \% \\ 
	\bottomrule 
\end{tabular}
\end{table}

For the low-rank and reduced-order networks, we used the \textsc{Matlab} implementation published with \cite{alfke18} to precompute the required eigeninformation efficiently without setting up the full matrix. Without this method, the low-rank setup times in Table~\ref{tbl:spiral} would be approximately 4.3 seconds instead.

\subsection{Hypergraph datasets}

We will now introduce two hypergraph datasets obtained from the UCI Machine Learning Repository. In both cases, each node comes with a few discrete properties. For each property, we create hyperedges connecting all nodes with the same property value. The resulting hypergraph is used for convolution. Twenty nodes are marked as training nodes. We use the hypergraph incidence matrix as the network input $X^{(0)}$. The layer widths are fixed by $N_0=|E|$ and $N_2=m$, while the hidden layer width $N_1$ is chosen freely. For the full-rank GCN architecture, we add a comparison of a \textit{naive} implementation using the full Laplacian matrix and an \textit{efficient} one utilizing the structure discussed in Section~\ref{sec:hypergraph:structure}.

The \emph{Cars Evaluation} dataset\footnote{\url{https://archive.ics.uci.edu/ml/datasets/Car+Evaluation}} consists of $n=1728$ theoretical car setups which are to be categorized into $m=4$ classes. Six properties are turned into $|E|=21$ hyperedges and we choose a hidden layer width of $N_1=8$.
The \emph{Mushroom} dataset\footnote{\url{https://archive.ics.uci.edu/ml/datasets/Mushroom}} consists of $n=8124$ mushrooms samples which are to be predicted to be edible or poisonous, i.e., $m=2$. We obtain $|E|=112$ hyperedges and choose $N_1=16$.
Results for these datasets are listed in Table~\ref{tbl:hypergraphs}. Runtimes for setup and training are aggregated.

\begin{table}
\caption{Results for hypergraph datasets}
\label{tbl:hypergraphs}
\centering
\begin{tabular}{@{}llcccc@{}} 
	\toprule
	\multirow{2}{*}{Network} & \multirow{2}{*}{Filter function} & \multicolumn{2}{c}{Cars} & \multicolumn{2}{c}{Mushrooms} \\ 
	\cmidrule(lr){3-4} \cmidrule(lr){5-6}
	& & Time & Accuracy & Time & Accuracy \\
	\midrule
	
GCN & Linear & 35.3 s & 63.04 \% & 1372 s & 88.82 \% \\ 
	(naive) & Quadratic & 25.4 s & 27.39 \% & 945 s & 70.09 \% \\ 
	\midrule
	
	\multirow{3}{4em}{GCN (efficient)} 
	& Linear & 1.40 s & 63.04 \% & 16.73 s & 88.82 \% \\ 
	& Quadratic & 1.39 s & 27.39 \% & 16.88 s & 70.09 \% \\ 
	& \textbf{Pseudoinverse} & 1.42 s & \textbf{93.44 \%}  & 16.84 s & \textbf{91.76 \%} \\ 
	\midrule
	
	\multirow{3}{4em}{L-GCN (rank 20)}
	& Linear & 1.39 s & 63.03 \% & 7.36 s &  89.14 \% \\
	& Quadratic & 1.40 s & 27.39 \% & 7.34 s & 53.61 \% \\ 
	& \textbf{Pseudoinverse} & 1.42 s & \textbf{98.90 \%} & 7.31 s & \textbf{91.72} \% \\ 
	\midrule
	
	\multirow{3}{4em}{R-GCN (rank 20)} 
	& Linear & 1.00 s & 63.16 \% & 4.54 s & 87.90 \% \\
	& Quadratic & 0.99 s & 27.04 \% & 4.47 s & 81.05 \%  \\ 
	& \textbf{Pseudoinverse} & 0.99 s & \textbf{90.33 \%} & 4.49 s & \textbf{92.83} \% \\
	\bottomrule 
\end{tabular} 
\end{table}

\subsection{Performance comparison}

In all our experiments, the pseudoinverse filter function produces the best results, in many cases by a large margin. The linear function only comes close in the mushrooms example, while the quadratic function fails completely most of the time.

The traditional GCN networks using the full Laplacian take a long time to train, which will be prohibitive in certain applications. It is unclear at this point why the quadratic kernel leads to shorter training times. Exploiting the hypergraph Laplacian structure achieves a speed-up of up to 2500\% (cars) and 8400\% (mushrooms), preserving the accuracy.

An even greater runtime gain is achieved by low-rank filters. In almost all cases, these also yields better accuracy results than their full-rank counterparts. We interpret this remarkable observation to be caused by the fact that these filters rigorously ban all noise eigenvectors from the output, reinforcing (DP3). With the cars dataset, there is no speed-up over the efficient full-rank implementation because these kernels already have rank $|E|=21$. The larger $|E|$, the more significant the acceleration becomes since we expect that a target rank of 20 to 30 should suffice even for larger datasets.

The reduced-order architecture, however, produces very polarizing results. For the spiral and cars datasets, the accuracy strongly decreases compared to the simple low-rank approach. For the mushrooms dataset, on the other hand, the accuracy is even improved, producing the best result for that dataset both from a runtime and accuracy perspective. It appears that the success of activation in the spectral domain following \eqref{eq:layer_op_spectral_activation} depends strongly on the data. When testing a new dataset, there is currently no way to predict a priori whether an R-GCN will succeed.

For comparison, the GCN method using the nonlinear hypergraph Laplacian has a reported accuracy of 90\% for the mushrooms dataset with the same number of training nodes \cite[Figure~5]{yadati19}.

\section{Conclusion}

The main contribution of our paper is the introduction of low-rank filters. These do not only decrease runtimes by several orders of magnitude over full-rank dense kernels, they also produce more accurate classifications. We further introduce the pseudoinverse filter, which seems to be intrinsically a better choice than the current standard linear filter. Moreover, we present the reduced-order GCN architecture, which is strongly dependent on the dataset but in certain cases further improves runtime and accuracy.

\titleformat*{\section}{\normalfont\LARGE\bfseries\filcenter}
\section*{Appendix}
\setcounter{section}{0}
\renewcommand{\thesection}{A\arabic{section}}
\titleformat{\section}{\normalfont\large\bfseries}{\thesection}{1em}{}

\section{Overview}

In Section~\ref{sec:app:gcn}, we give more details on our problem setting, the general GCN architecture, and properties of low-rank filters. Most notably, we present a new, more general way of introducing a \emph{smoothed graph Laplacian} in Section~\ref{sec:app:gcn:smooth}, which may be theoretically interesting due to its connection to existing techniques, even though it does not necessarily produce better results.

Section~\ref{sec:app:hypergraph} covers details concerning the hypergraph Laplacian. Most notably, we give formulas for efficient evaluation of the convolutional layer operation in Section~\ref{sec:app:hypergraph:efficient}, and in Section~\ref{sec:app:hypergraph:interpretation} we incorporate the hypergraph Laplacian into our previously introduced smoothing framework.

Finally in Section~\ref{sec:app:experiments}, we list a multitude of details about the datasets and implementation, targeted both at efficiency and reproducibility. We also give additional results for a rank comparison and for tests with our smoothing framework.

\section{Graph Convolutional Networks}
\label{sec:app:gcn}

\subsection{Supervised learning vs. semi-supervised learning}
\label{sec:app:gcn:ssl}

Most neural network applications work in the setting of \emph{supervised learning}. There, a dataset of training points is given, all of which are associated with a $d$-dimensional feature vector and an $m$-dimensional desired output vector. The task is then to find a neural network that can yield a predicted output $y \in \mathbb{R}^m$ for any new data point feature vector $x \in \mathbb{R}^d$.

Semi-supervised learning, which is considered in this paper, differs from that viewpoint. Here, a fixed dataset of $n$ data points is given to which no new data points will be added. Only a few of these data points are labelled. Instead of looking for a mapping $\mathbb{R}^d \to \mathbb{R}^m$, we understand the neural network as a mapping of the full feature matrix $X \in \mathbb{R}^{n \times d}$ to the complete output matrix $Y \in \mathbb{R}^{n \times m}$. This way, we restrict ourselves from evaluating the network for single feature vectors, especially future ones. Moreover the effort of each evaluation or training step depends linearly on the size $n$ of the dataset. In recompense for these drawbacks, 
semi-supervised learning allows the network to detect clusterings in the dataset,
giving good results already for a very small number of training points.

Figure~\ref{fig:nn_ssl} depicts the series of operations in a general neural network for SSL.

\begin{figure}
	\begin{center}
		\begin{tikzpicture}
		
		\matrix (m) [matrix of math nodes, column sep=-2pt, ampersand replacement=\&] {
			\mathbb{R}^{n \times N_0} \& \to \& \cdots \& \to \& \mathbb{R}^{n \times N_{l}} \& \to \& \mathbb{R}^{n \times N_{l+1}} \& \to \& \cdots \& \to \& \mathbb{R}^{n \times N_L} \& \to \& \mathbb{R}^{n \times m} \\
			X^{(0)} \& \mapsto \& \cdots \& \mapsto \& X^{(l)} \& \mapsto \& X^{(l+1)} \& \mapsto \& \cdots \& \mapsto \& X^{(L)} \& \mapsto \& Y \\
		};
		
		\node[above=0.2cm of m-1-1, anchor=base] (inputlabel) {\small Input};
		\node[above=0.2cm of m-1-11, anchor=base] {\small Output};
		\node[above=0.2cm of m-1-13, anchor=base] {\small Prediction};
		
		\draw[decorate, decoration={brace, amplitude=7pt}] (m-2-7.south east) -- (m-2-5.south west) node[midway, below=7pt] (convdesc) {\small $l$-th layer operation};
		
		\draw[decorate, decoration={brace, amplitude=7pt}] (m-2-13.south east) -- (m-2-11.south west) node[midway, below=7pt] (outputdesc) {\small $Y = \Psi(X^{(L)})$};

		\def\dx{0.05}
		\def\dy{0.05}
		\def\n{1.5}
		
		\path (m-1-1.base) +(0, 2cm) coordinate (sketchcenter0);
		\path (m-1-11.base) +(0, 2cm) coordinate (sketchcenter5);
\foreach \i in {1,...,4} {
			\coordinate (sketchcenter\i) at ($(sketchcenter0)!\i/5.0!(sketchcenter5)$);
		}

		\foreach \Nl [count=\i from 0] in {12, 8, 6, 4, 3, 2} {
			\pgfmathtruncatemacro\NlMinusOne{\Nl-1}
			\foreach \j in {1,...,\NlMinusOne} {
				\pgfmathsetmacro\z{\j - 0.5*\Nl}
				\draw[black!50] (sketchcenter\i)
				++(\z*\dx, 0.5*\n + \z*\dy)
				-- ++(0, -\n);
			}
\draw[black] (sketchcenter\i) 
			++(0.5*\dx*\Nl, 0.5*\n + 0.5*\dy*\Nl) 
			-- ++(0, -\n) 
			-- ++(-\dx*\Nl, -\dy*\Nl)
			-- ++(0, \n)
			-- cycle;
		}
		
		\foreach \j [count=\i from 0] in {1,...,5} {
			\draw[->, red, thick, shorten >=0.3cm] (sketchcenter\i) -- (sketchcenter\j);
		}
		
		\def\h{0.07}
		
		\foreach \y [count=\i] in {1.0, 0.8, ..., -1.0} {
			\path (m-1-1.base) +(-2, 2+\y) coordinate (inputcenter\i);
			\foreach \j in {1,...,12} {
				\pgfmathsetmacro\z{\j-6}
				\draw[black!50] (inputcenter\i) ++(\z*\dx, 0.5*\h + \z*\dy) -- ++(0, -\h);
			}
			\draw[black] (inputcenter\i)
			++(\dx*6, 0.5*\h + \dy*6) -- ++(0, -\h) -- ++(-12*\dx, -12*\dy) -- ++(0, \h) --cycle;
			
			\path (m-1-13.base) +(0, 2+\y) coordinate (predictioncenter\i);
			\draw[black!50] (predictioncenter\i) ++(0,0.5*\h) -- ++(0,-\h);
			\draw[black] (predictioncenter\i)
			++(\dx, 0.5*\h + \dy) -- ++(0,-\h) -- ++(-2*\dx,-2*\dy) -- ++(0,\h) -- cycle;
			
			\pgfmathsetmacro\z{0.5*(\n-\h)*\y}
			\draw[<-, blue!60, shorten >=0.0cm, shorten <=0.4cm] (sketchcenter0)
			++(0,\z) -- (inputcenter\i);
			\draw[->, blue!60, shorten <=0.1cm, shorten >=0.3cm] (sketchcenter5)
			++(0,\z) -- (predictioncenter\i);
		}
		
		\path (m-1-1.base) +(-2,0) node[anchor=base] {\small Features};
		
		\end{tikzpicture}
	\end{center}
	\caption{Layers of a general neural network for semi-supervised learning.}
	\label{fig:nn_ssl}
\end{figure}
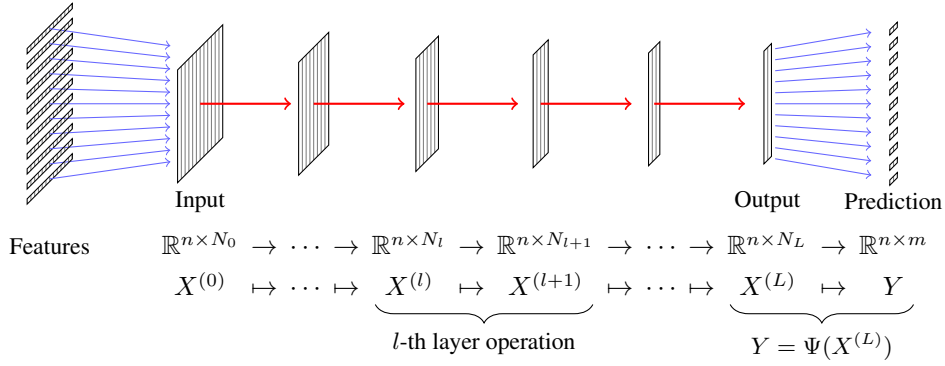

\subsection{Filter functions and their low-rank equivalents}
\label{sec:app:gcn:filters}

This section gives visualizations of the filter functions used in our experiments. First off, let again the eigenvalues of $\mathcal{L}$ be denoted by
\[
0 = \lambda_1 < \lambda_2 \leq \ldots \leq \lambda_n < 2.
\]
Then, the three main filter functions and their corresponding kernel matrices are given by
\begin{align}
&\text{Linear:} & 
\varphi(\lambda) &= 1 - \frac{\lambda}{\lambda_n}, & 
\mathcal{K} &= I - \frac{1}{\lambda_n} \mathcal{L}, 
\label{eq:app:gcn:filters:linear} \\
&\text{Quadratic:} & 
\varphi(\lambda) &= \left(1 - \frac{\lambda}{\lambda_n}\right)^2, 
& \mathcal{K} &= I - \frac{2}{\lambda_n} \mathcal{L} + \frac{1}{\lambda_n^2} \mathcal{L}^2,
\label{eq:app:gcn:filters:quadratic} \\
&\text{Pseudoinverse:} & 
\varphi(\lambda) &= \begin{cases} \frac{\lambda_2}{\lambda} & \text{if $\lambda > 0$,} \\ 0 & \text{else,} \end{cases} & 
\mathcal{K} &= \lambda_2 \mathcal{L}^\dagger.
\label{eq:app:gcn:filters:pseudoinverse}
\end{align}
Note that this definition of the pseudoinverse filter differs from the one introduced in our paper by a factor of $\lambda_2$.
Scaling a filter function does not change the spanned function space and in practice it will be compensated for by reciprocally scaled weight matrices $\Theta$. For comparability, we would like to scale all filter functions in such a way that the maximum entry in $\varphi(\Lambda)$ is 1.

For a small target rank $r$, the low-rank approximations to these filters can be set up by identifying the ``dominant'' $r$ eigenvalues, i.e., those where $|\varphi(\lambda)|$ is largest. For the linear and quadratic function, $\lambda_1$ through $\lambda_r$ are dominant. For the pseudoinverse function, however, $\varphi(\lambda_1)$ is zero, so $\lambda_2$ through $\lambda_{r+1}$ are dominant. The resulting low-rank filters are given by
\begin{align}
&\text{Linear:} & 
\varphi(\lambda) &= \begin{cases} 1 - \frac{\lambda}{\lambda_n} & \text{if $\lambda \in [0,\lambda_r]$,} \\ \makebox[5.1em][l]{0} & \text{else,} \end{cases}
\label{eq:app:gcn:filters:lowranklinear} \\
&\text{Quadratic:} & 
\varphi(\lambda) &= \begin{cases} \left(1 - \frac{\lambda}{\lambda_n}\right)^2 & \text{if $\lambda \in [0,\lambda_r]$,} \\ \makebox[5.1em][l]{0} & \text{else,} \end{cases}
\label{eq:app:gcn:filters:lowrankquadratic} \\
&\text{Pseudoinverse:} & 
\varphi(\lambda) &= \begin{cases} \frac{\lambda_2}{\lambda} & \text{if $\lambda \in [\lambda_2,\lambda_{r+1}]$,} \\ \makebox[5.1em][l]{0} & \text{else,} \end{cases} & 
\label{eq:app:gcn:filters:lowrankpseudoinverse}
\end{align}
Note that because all filters are only evaluated in eigenvalues, the case differentiations
\[
\lambda \in \{\lambda_2,\ldots,\lambda_{r+1}\}, \qquad \lambda \in [\lambda_2, \lambda_{r+1}], \qquad \lambda \in (0, \lambda_{r+1}]
\]
are all equivalent.

Figure~\ref{fig:app:gcn:filters} shows the filter functions from \eqref{eq:app:gcn:filters:linear} through \eqref{eq:app:gcn:filters:lowrankpseudoinverse} evaluated on an exemplary eigenvalue set depicted by the gray vertical lines.

\begin{figure}
	\centering
	\subfloat[Full-rank filters]{
		\begin{tikzpicture}
		\def\lambdatwo{0.3}
		\def\lambdar{0.7}
		\def\lambdan{1.5}
		\begin{axis}[
		width=0.55\textwidth,
		thick,
		xmin=0, xmax=2,
		ymin=0, ymax=1,
		xtick={0,\lambdatwo,\lambdar,1,\lambdan,2},
		xticklabels={$0$, $\lambda_2$, $\lambda_r$, $1$, $\lambda_n$, 2},
		ytick={0,1},
		tick align=outside,
		legend cell align=left,
		legend style = {font=\small},
		major tick style = {thick, black},
		]
		
		\foreach \x in {0.3, 0.4, 0.5, 0.6, 0.65, 0.7, 0.75, 0.8, 0.85, 0.87, 0.9, 0.92, 0.94, 0.97, 1, 1.03, 1.06, 1.1, 1.15, 1.2, 1.3, 1.4, 1.5}{
			\addplot[black!30,thin, forget plot] coordinates {(\x,0) (\x, 1)};
		}
		
		\addplot[black!70, thin, domain=0:2, samples=101, forget plot] {1-x/\lambdan};
		\addplot[black!70, thin, domain=0:2, samples=101, forget plot] {(1-x/\lambdan)^2};
		\addplot[black!70, thin, domain=\lambdatwo:2, samples=101, forget plot] {\lambdatwo/x};
		
\addplot[blue, mark=+, only marks] table[x=x, y=yLinear] {filtervalues.csv};
		\addlegendentry{Linear \eqref{eq:app:gcn:filters:linear}};
		\addplot[red, mark=x, only marks] table[x=x, y=yQuadratic] {filtervalues.csv};
		\addlegendentry{Quadratic \eqref{eq:app:gcn:filters:quadratic}};
		\addplot[amber, mark=o, only marks] table[x=x, y=yPinv] {filtervalues.csv};
		\addlegendentry{Pseudoinverse \eqref{eq:app:gcn:filters:pseudoinverse}};
		
		\end{axis}
		\end{tikzpicture}
	}
	\hspace{-0.8em}
	\subfloat[Low-rank filters] {
		\begin{tikzpicture}
		\def\lambdatwo{0.3}
		\def\lambdar{0.7}
		\def\lambdarone{0.75}
		\def\lambdan{1.5}
		\pgfmathsetmacro\pinvvalue{\lambdatwo/\lambdarone}
		\pgfmathsetmacro\linearvalue{1-\lambdar/\lambdan}
		\begin{axis}[
		width=0.55\textwidth,
		thick,
		xmin=0, xmax=2,
		ymin=0, ymax=1,
		xtick={0,\lambdatwo,\lambdar,1,\lambdan,2},
		xticklabels={$0$, $\lambda_2$, $\lambda_r$, $1$, $\lambda_n$, 2},
		ytick={0,1},
		tick align=outside,
		legend cell align=left,
		legend style = {font=\small},
		major tick style = {thick, black},
		]
		
		\foreach \x in {0.3, 0.4, 0.5, 0.6, 0.65, 0.7, 0.75, 0.8, 0.85, 0.87, 0.9, 0.92, 0.94, 0.97, 1, 1.03, 1.06, 1.1, 1.15, 1.2, 1.3, 1.4, 1.5}{
			\addplot[black!30,thin, forget plot] coordinates {(\x,0) (\x, 1)};
		}
		
		\addplot[black!70, thin, domain=0:2, samples=101, forget plot] {1-x/\lambdan};
		\addplot[black!70, thin, domain=0:2, samples=101, forget plot] {(1-x/\lambdan)^2};
		\addplot[black!70, thin, domain=\lambdatwo:2, samples=101, forget plot] {\lambdatwo/x};
		
		\addplot[blue, mark=+, only marks] table[x=x, y=yLinearLowrank] {filtervalues.csv};
		\addlegendentry{Linear \eqref{eq:app:gcn:filters:lowranklinear}};
		\addplot[red, mark=x, only marks] table[x=x, y=yQuadraticLowrank] {filtervalues.csv};
		\addlegendentry{Quadratic \eqref{eq:app:gcn:filters:lowrankquadratic}};
		\addplot[amber, mark=o, only marks] table[x=x, y=yPinvLowrank] {filtervalues.csv};
		\addlegendentry{Pseudoinverse \eqref{eq:app:gcn:filters:lowrankpseudoinverse}};
		
		\end{axis}
		\end{tikzpicture}
	}
	\caption{Filter functions and their low-rank equivalents}
	\label{fig:app:gcn:filters}
\end{figure}
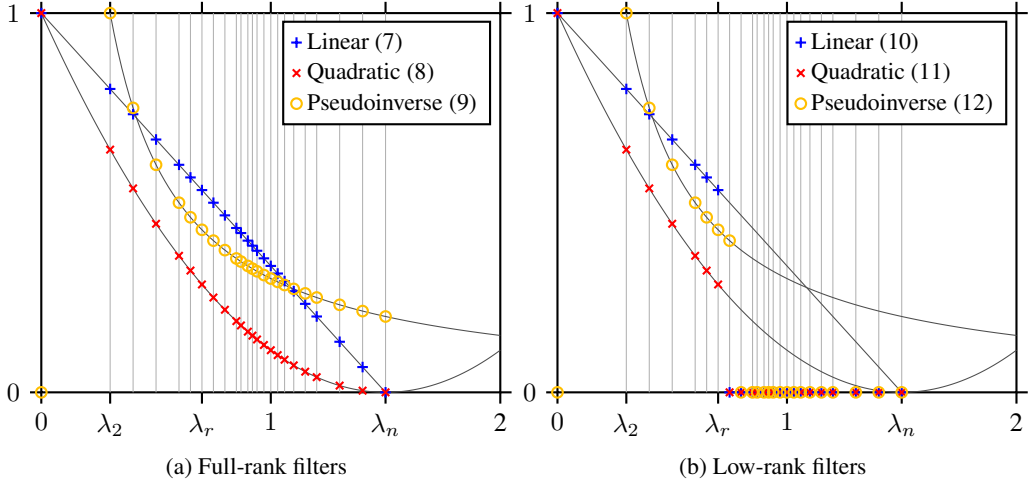

\subsection{Proof of best-approximation property}
\label{sec:app:gcn:bestapproximation}

In this section, we would like to formalize the claims of Section~2.3.4 of our paper in a lemma.

\begin{lemma}
	Let $\Lambda = \diag(\lambda_1, \ldots, \lambda_n) \in \mathbb{R}^{n\times n}$ be a diagonal matrix and $U \in \mathbb{R}^{n \times n}$ be an orthogonal matrix, i.e., $U^T U = I$. For a given arbitrary function $\varphi : \mathbb{R} \to \mathbb{R}$ and a given target rank $1 < r < n$, define a second function via
	\[
	\tilde{\varphi}(\lambda) = \begin{cases} \varphi(\lambda) & \text{if $\lambda$ is one of the $r$ eigenvalues with largest $|\varphi(\lambda)|$ value,} \\ 0 & \text{else.} \end{cases}
	\]
	Then $\tilde{\mathcal{K}} = U \tilde{\varphi}(\Lambda) U^T$ is a best rank-$r$ approximation to $\mathcal{K} = U \varphi(\Lambda) U^T$ in the Frobenius norm, i.e., $\rank(\tilde{\mathcal{K}}) = r$ and 
	\[
	\|\mathcal{K} - \tilde{\mathcal{K}}\|_{\mathrm{F}} = \min \{ \|\mathcal{K} - A\|_{\mathrm{F}} : A \in \mathbb{R}^{n \times n}, \ \rank(A) = r \},
	\]
	where $\|\cdot\|_{\mathrm{F}}$ is the square root of the sum of all squared entries of a matrix.
\end{lemma}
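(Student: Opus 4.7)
The plan is to reduce the claim to the Eckart–Young–Mirsky theorem by turning the eigendecomposition $\mathcal{K} = U\varphi(\Lambda)U^T$ into an SVD. The wrinkle is that $\varphi(\lambda_i)$ may be negative, so $\varphi(\Lambda)$ is not directly a singular value matrix. I would handle this with a sign split: write $\varphi(\lambda_i) = \epsilon_i \sigma_i$ with $\sigma_i = |\varphi(\lambda_i)| \ge 0$ and $\epsilon_i \in \{-1,+1\}$ (fixing $\epsilon_i = 1$ when $\varphi(\lambda_i) = 0$). Letting $S = \diag(\epsilon_1,\ldots,\epsilon_n)$ and $\Sigma = \diag(\sigma_1,\ldots,\sigma_n)$, I get $\varphi(\Lambda) = \Sigma S$, and hence
\[
\mathcal{K} = U\Sigma S U^T = U \Sigma V^T, \qquad V := US.
\]
Since $S$ is diagonal with entries $\pm 1$, it is orthogonal, so $V$ is orthogonal as a product of orthogonal matrices. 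After applying a permutation matrix $P$ that reorders the indices so that the $\sigma_i$ appear in non-increasing order, $UP$, $P^T\Sigma P$, and $VP$ give a bona fide SVD of $\mathcal{K}$.

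Next I would invoke the Eckart–Young–Mirsky theorem, which states that a best rank-$r$ Frobenius approximation to a matrix with SVD $\tilde U \tilde\Sigma \tilde V^T$ is obtained by keeping the $r$ largest singular values and zeroing out the rest. Translating this back through the permutation $P$, the best approximation is $U \Sigma_r S U^T = U\,(\Sigma_r S)\,U^T$, where $\Sigma_r$ is the diagonal matrix that equals $\Sigma$ on the $r$ indices with the largest $\sigma_i$ and $0$ elsewhere. But $\Sigma_r S$ is precisely the diagonal matrix whose $i$th entry is $\varphi(\lambda_i)$ if $i$ is among the $r$ indices maximizing $|\varphi(\lambda_i)|$ and $0$ otherwise, that is, $\tilde\varphi(\Lambda)$. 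Therefore the best rank-$r$ approximation is $U\tilde\varphi(\Lambda)U^T = \tilde{\mathcal{K}}$.

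Finally I would verify $\rank(\tilde{\mathcal{K}}) = r$: the $r$ columns of $U_r$ corresponding to the retained eigenvalues are orthonormal and hence linearly independent, and $\tilde{\mathcal{K}} = U_r \varphi(\Lambda_r) U_r^T$ with $\varphi(\Lambda_r)$ an invertible $r \times r$ diagonal matrix (here one uses the implicit assumption that at least $r$ of the $|\varphi(\lambda_i)|$ are nonzero, i.e.\ that a rank-$r$ approximation is not trivially a lower-rank one; otherwise the claim should be read as $\rank(\tilde{\mathcal{K}}) \le r$).

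I do not expect a serious obstacle: the only real content is the sign split that converts the eigendecomposition into an SVD, after which Eckart–Young applies off the shelf. The one point that requires some care, but only of notation, is the possibility of ties among the $|\varphi(\lambda_i)|$ values — any tiebreaking rule produces an equally valid best approximation, and the statement of the lemma is insensitive to the choice.
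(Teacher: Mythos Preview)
Your proposal is correct and follows essentially the same route as the paper: both construct an SVD of $\mathcal{K}$ by absorbing the signs of $\varphi(\lambda_i)$ into one of the orthogonal factors (the paper writes $\hat V$ with rows $\pm u_i^T$, you write $V = US$ with $S$ the sign matrix) and then appeal to Eckart--Young--Mirsky. The only cosmetic difference is that the paper unfolds the Eckart--Young argument via a Frobenius-norm inequality from Horn and Johnson rather than citing the theorem by name.
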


\begin{proof}
	The lemma is proven by stating the singular value decompositions of $\mathcal{K}$ and $\tilde{\mathcal{K}}$ and then following, e.g., \cite[Section 7.4.2]{hornjohnson85}.
	Let $u_i \in \mathbb{R}^n$ denote the $i$-th column of $U$. Also let $i_1,\ldots,i_n$ be a permutation of $1,\ldots,n$ such that
	\[
	|\varphi(\lambda_{i_1})| \geq \ldots \geq |\varphi(\lambda_{i_n})|.
	\]
	The SVD of $\mathcal{K}$ is given by
	\begin{align*}
	\mathcal{K} = U \varphi(\Lambda) U^T &= \begin{bmatrix} u_{i_1} & \cdots & u_{i_n} \end{bmatrix} \begin{bmatrix} \varphi(\lambda_{i_1}) && \\ &\ddots& \\ && \varphi(\lambda_{i_n}) \end{bmatrix} \begin{bmatrix} u_{i_1}^T \\ \vdots \\ u_{i_n}^T \end{bmatrix} \\
	&= \begin{bmatrix} u_{i_1} & \cdots & u_{i_n} \end{bmatrix} \begin{bmatrix} |\varphi(\lambda_{i_1})| && \\ &\ddots& \\ && |\varphi(\lambda_{i_n})| \end{bmatrix} \begin{bmatrix} \pm u_{i_1}^T \\ \vdots \\ \pm u_{i_n}^T \end{bmatrix} =: \hat{U} \Sigma \hat{V}^T,
	\end{align*}
	where the signs of each $\pm u_i$ in $\hat{V}$ are chosen to be equal to the sign of the corresponding $\varphi(\lambda_i)$. This does indeed constitute an SVD, as $\hat{U}$ and $\hat{V}$ are still orthogonal and the diagonal elements of $\Sigma$ are non-negative and sorted non-increasingly.
	
	For the low-rank approximation, we have $\tilde{\varphi}(\lambda_{i_j}) = \varphi(\lambda_{i_j})$ for $j \leq r$ and $\tilde{\varphi}(\lambda_{i_j}) = 0$ for $j > r$. Hence the analogous SVD of $\tilde{\mathcal{K}}$ is
	\[
	\tilde{\mathcal{K}} = \begin{bmatrix} u_{i_1} & \cdots & u_{i_n} \end{bmatrix} \begin{bmatrix} |\varphi(\lambda_1)| &&&& \\ &\ddots&&& \\ && |\varphi(\lambda_r)| && \\ &&& 0 & \\ &&&& \ddots \end{bmatrix} \begin{bmatrix} \pm u_{i_1}^T \\ \vdots \\ \pm u_{i_n}^T \end{bmatrix} =: \hat{U} \tilde{\Sigma} \hat{V}^T.
	\]
	This formally shows the rank of $\tilde{\mathcal{K}}$ to be $r$.
	
	Now, it is stated in \cite[Corollary 7.4.1.5a]{hornjohnson85} that for any two matrices $A,B \in \mathbb{R}^{n\times n}$, it holds
	\begin{equation}
	\label{eq:app:gcn:frobeniusinequality}
	\|B-A\|_{\mathrm{F}}^2 \geq \sum_{j=1}^n (\sigma_j(B) - \sigma_j(A))^2,
	\end{equation}
	with equality if and only if $\trace(BA^T) = \sum_{j=1}^n \sigma_j(B) \sigma_j(A)$.
	Here $\sigma_j(\cdot)$ denote the non-increasingly sorted singular values of a matrix, $\sigma_1(\cdot) \geq \ldots \geq \sigma_n(\cdot)$.
	
	If $A \in \mathbb{R}^{n \times n}$ is an arbitrary matrix with rank $r$, we have $\sigma_j(A) = 0$ for $j>r$ and it follows that
	\[
	\|\mathcal{K}-A\|_{\mathrm{F}}^2 \geq \sum_{j=1}^r \underbrace{(\sigma_j(\mathcal{K}) - \sigma_j(A))^2}_{\geq 0} + \sum_{j=r+1}^n \sigma_j(\mathcal{K})^2 \geq \sum_{j=r+1}^n |\varphi(\lambda_{i_j})|^2.
	\]
	For $A = \tilde{\mathcal{K}}$, we see
	\[
	\trace(\mathcal{K}\tilde{\mathcal{K}}^T) = \trace(\hat{U} \Sigma \underbrace{\hat{V}^T \hat{V}}_{=I} \tilde{\Sigma} \hat{U}^T) = \trace(\Sigma \tilde{\Sigma}) = \sum_{j=1}^n \sigma_j(\mathcal{K}) \sigma_j(\tilde{\mathcal{K}})
	\]
	since the trace is invariant under orthonormal similarity transforms. For this reason, we have equality in \eqref{eq:app:gcn:frobeniusinequality} and thus
	\[
	\|\mathcal{K} - \tilde{\mathcal{K}}\|_{\mathrm{F}}^2 = \sum_{j=1}^r \underbrace{(\sigma_j(\mathcal{K}) - \sigma_j(\tilde{\mathcal{K}}))^2}_{= 0}\ + \sum_{j=r+1}^n \sigma_j(\mathcal{K})^2 = \sum_{j=r+1}^n |\varphi(\lambda_{i_j})|^2.
	\]
	This shows that $\|\mathcal{K} - \tilde{\mathcal{K}}\|_{\mathrm{F}} \leq \|\mathcal{K} - A\|_{\mathrm{F}}$ for any rank-$r$ matrix $A$.
\end{proof}

\subsection{Smoothed Laplacians, graphs with loops, and the \textit{re-normalization trick}}
\label{sec:app:gcn:smooth}

In \cite{kipf17}, a GCN with kernel matrix 
\begin{equation}
\label{eq:app:kipfkernel}
\mathcal{K} = (D + I)^{-1/2} (W + I) (D + I)^{-1/2}
\end{equation}
was used, which was motivated as a \emph{re-normalization} of the kernel matrix 
\[
2I - \mathcal{L} = I + D^{-1/2} W D^{-1/2}
\]
originating from the filter function $\varphi(\lambda) = 2 - \lambda$.

Later, it was observed that the same matrix can be obtained from the filter function $\varphi(\lambda) = 1 - \lambda$ if we replace $\mathcal{L}$ by an in some way \emph{smoothed} graph Laplacian matrix \cite{li18}. Here we will introduce smoothing in a very general way. Given a diagonal matrix $S = \diag(s_1,\ldots,s_n)$ with positive entries, we define the smoothed graph Laplacian as
\begin{equation} \label{eq:glo:smooth}
\begin{aligned}
\mathcal{L}_S
&= (D + S)^{-1/2} (D - W) (D + S)^{-1/2} \\
&= I - (D+S)^{-1/2} (W+S) (D+S)^{-1/2} 
\end{aligned}
\end{equation}
We refer to $S$ as the \emph{smoother} matrix. By choosing the special identity smoother $S=I$, we see that \eqref{eq:app:kipfkernel} indeed satisfies $\mathcal{K} = I - \mathcal{L}_I$.

The definition \eqref{eq:glo:smooth} can also be understood as the traditional graph Laplacian of a graph with adjacency matrix $W+S$. This differs from the original setting because usually the Laplacian matrix is only defined for graphs without loops, where the diagonals of $W$ and $\mathcal{L}$ are $0$ and $1$, respectively. Loops are edges with the same start and end node, leading to different diagonal entries in $W$ and thus $\mathcal{L}$. Using a smoothed Laplacian can now be understood as augmenting the edge set by loops with weight $s_i$ around each node $i$.

The name \emph{smoothed graph Laplacian} was also introduced in a different context in \cite{langone16}. In our setting, we understand \eqref{eq:glo:smooth} as a type of smoothing because, e.g., $S=I$ leads to a smoother distribution in the entries of the normalizer $(D+S)^{-1/2}$, i.e., its entries are relatively closer to each other than in $D^{-1/2}$. In the graph context, introducing loops with the same weight to each node leads to a smoother distribution of node degrees. We will stick with this name even though we acknowledge that specific choices of $S \neq I$ may even lead to less smooth behaviour.

\section{Hypergraphs}
\label{sec:app:hypergraph}

\subsection{Efficient evaluations of convolutional layer operations with the hypergraph Laplacian for few hyperedges}
\label{sec:app:hypergraph:efficient}

Consider a hypergraph with a small number of hyperedges, i.e., $|E| \ll n$.
In this section, we will describe how to exploit the special structure of the hypergraph Laplacian (i.e. the graph Laplacian with hypergraph smoothing). Our goal is to decrease the evaluation cost of each layer operation, possibly by avoiding the setup of the full Laplacian and kernel matrix altogether.
Recall that the hypergraph Laplacian can be written as
\begin{equation}
\label{eq:app:hypergraph:lowrankstructure}
\mathcal{L} = I - \tilde{H} \tilde{H}^T \quad \text{with} \quad \tilde{H} = D_V^{-1/2} H W_E^{1/2} D_E^{-1/2} \in \mathbb{R}^{n \times |E|}.
\end{equation}
Consider a full singular value decomposition (SVD) of $\tilde{H}$, i.e.
\begin{equation*}
\tilde{H} = U \Sigma V^T
\end{equation*}
where $U \in \mathbb{R}^{n \times n}$ and $V \in \mathbb{R}^{|E|\times |E|}$ are orthogonal matrices and $\Sigma \in \mathbb{R}^{n \times |E|}$ holds the singular values of $\tilde{H}$ in descending order on its main diagonal. A \textit{thin} SVD can be obtained by splitting the SVD into
\begin{equation*}
\tilde{H} = U \Sigma V^T = \begin{bmatrix} U_R & \tilde{U} \end{bmatrix} \begin{bmatrix} \Sigma_R & 0 \\ 0 & 0 \end{bmatrix} \begin{bmatrix} V_R^T \\ \tilde{V}^T \end{bmatrix} = U_R \Sigma_R V_R^T,
\end{equation*}
where $R$ is the rank of $\tilde{H}$, $U_R \in \mathbb{R}^{n \times R}$ and $V_R \in \mathbb{R}^{|E|\times R}$ hold the first $R$ columns of $U$ and $V$, and $\Sigma_R \in \mathbb{R}^{R\times R}$ is the diagonal matrix of the non-zero singular values. Note that in most applications we can assume $R = |E|$, leading to $V=V_R$ and $\Sigma = \begin{bmatrix} \Sigma_R^T & 0 \end{bmatrix}^T$. 

The SVD immediately gives us the eigenvalues of $\mathcal{L}$ due to
\begin{equation}
\label{eq:app:hypergraph:fulleigenvalues}
\begin{aligned}
\mathcal{L} &= I - \tilde{H} \tilde{H}^T = I - U \Sigma V^T V \Sigma^T U^T = U U^T - U \Sigma \Sigma^T U^T = U \left(I - \Sigma \Sigma^T \right) U^T \\
&= U \left(I - \begin{bmatrix} \Sigma_R^2 & 0 \\ 0 & 0 \end{bmatrix} \right) U^T = U \begin{bmatrix} I-\Sigma_R^2 & 0 \\ 0 & I \end{bmatrix} U^T.\end{aligned}
\end{equation}
In other words, each of the $R$ nonzero singular values $\sigma_i$ of $\tilde{H}$ produces an eigenvalue $\lambda_i = 1 - \sigma_i^2$. The corresponding eigenvector is the $i$-th column of $U_R$. The remaining $n-R$ eigenvalues are all $1$ and their associated eigenvectors are the columns of $\tilde{U}$.

Even though we know the full eigenvalue decomposition \eqref{eq:app:hypergraph:fulleigenvalues} of $\mathcal{L}$, it is beneficial to work only with \begin{equation*}
\mathcal{L} = I - U \Sigma \Sigma^T U^T = I - U \begin{bmatrix}  \Sigma_R^2 & 0 \\ 0 & 0 \end{bmatrix} U^T = I - U_R \Sigma_R^2 U_R^T.
\end{equation*}
This conveniently exploits the structure and it is cheap to compute, as the thin SVD of $\tilde{H}$ can be computed via the eigenvalue decomposition of the smaller matrix $\tilde{H}^T \tilde{H}$. Let $\mu_i$ be an eigenvalue of that matrix with eigenvector $v_i \in \mathbb{R}^{|E|}$. Then $\sigma_i = \sqrt{\mu_i}$ is a singular value of $\tilde{H}$ with right singular value $v_i$ and left singular value $u_i = \frac{1}{\sigma_i} \tilde{H} v_i$. We can skip this step and directly compute the eigenpair of $\mathcal{L}$ through $\lambda_i = 1-\mu_i$ and $u_i = \frac{1}{\sqrt{\mu_i}} \tilde{H} v_i$.

\subsubsection{Affine linear filters}
\label{sec:app:hypergraph:efficient:linear}

In the case of $\varphi(\lambda) = a_0 + a_1 \lambda$, it is clear that the corresponding kernel matrix is
\begin{equation}
\label{eq:app:hypergraph:efficient:linear}
\mathcal{K} = a_0 I + a_1 \mathcal{L} = (a_0 + a_1) I - a_1 \tilde{H} \tilde{H}^T.
\end{equation}
This structure as a linear combination of the identity and a low-rank matrix is already easy to exploit for cheap matrix products.

\subsubsection{Higher-order polynomial filters}
\label{sec:app:hypergraph:efficient:polynomial}

For higher-order polynomials $\varphi(\lambda) = \sum_{j=0}^{p} a_j \lambda^j$, the evaluation costs increases slightly. First, we can rewrite the polynomial using a Taylor expansion around $\lambda=1$,
\begin{equation}
\label{eq:app:hypergraph:efficient:polynomial}
\varphi(\lambda) = \sum_{j=0}^{p} b_j (\lambda - 1)^j \quad \text{with} \quad b_j = \frac{1}{j!} \frac{\mathrm{d}^j \varphi}{\mathrm{d} \lambda^j}(1) = \frac{1}{j!} \sum_{i=j}^{p} \frac{i!}{(i-j)!} a_i = \sum_{i=j}^{p} \binom{i}{j} a_i,
\end{equation}
where $\binom{i}{j}$ denotes the binomial coefficient. Hence the kernel matrix can be written as
\begin{equation}
\mathcal{K} = \sum_{j=0}^{p} a_j \mathcal{L}^j = \sum_{j=0}^{p} b_j (-\tilde{H} \tilde{H}^T)^j = b_0 I + \tilde{H} \underbrace{\left( \sum_{j=1}^p (-1)^j b_j (\tilde{H}^T \tilde{H})^{j-1} \right)}_{=: M} \tilde{H}^T.
\end{equation}
An efficient way to evaluate matrix products with this structure is to have a precomputation step that produces $b_0$, $\tilde{H}$, and the ``small'' matrix $M \in \mathbb{R}^{|E|\times |E|}$. This way, independently of the degree $K$, the training cost is  only slightly higher than with linear functions.

\subsubsection{Arbitrary full-rank filters}
\label{sec:app:hypergraph:efficient:fullrank}

For non-polynomial filter functions $\varphi$, we can use the SVD-based eigenvalue decomposition and the fact that most eigenvalues are $1$ to obtain
\begin{equation}
\begin{aligned}
\mathcal{K} &= U \varphi(\Lambda)U^T = U \begin{bmatrix} \varphi(\Lambda_R) & \\ & \varphi(I) \end{bmatrix} U^T = U \varphi(I) U^T + U \begin{bmatrix} \varphi(\Lambda_R)-\varphi(I) & \\ & 0 \end{bmatrix} U^T \\
&= \varphi(1) I + U_R \underbrace{\big(\varphi(\Lambda_R) - \varphi(1) I \big)}_{=: M} U_R^T,
\end{aligned}
\end{equation}
where $\Lambda_R$ again denotes the diagonal matrix holding the $R$ eigenvalues smaller than one.
Similar to polynomials, the kernel matrix is a linear combination of the identity and a low-rank matrix. The inner part $M$ even is diagonal, holding the values of $\varphi(\lambda_i) - \varphi(1)$ for all eigenvalues smaller than one. Moreover, we see that if $\varphi(1) = 0$ (which is desirable due to the third design principle), $\mathcal{K} = U_R \varphi(\Lambda_R) U_R^T$ itself only has rank $R$ and each such $\varphi$ may be considered a low-rank filter.

This way, we only need the SVD of $\tilde{H}$ to employ arbitrary full-rank filter functions. One example is the scaled pseudoinverse kernel from \eqref{eq:app:gcn:filters:pseudoinverse}, which leads to the kernel matrix
\begin{equation}
\mathcal{K} = \lambda_2 I + U_R M U_R^T \qquad \text{with} \qquad M = \lambda_2 \begin{bmatrix} -1 &&& \\ & \frac{1}{\lambda_2}-1 && \\ && \ddots & \\ &&& \frac{1}{\lambda_R} - 1 \end{bmatrix}.
\end{equation}

\subsubsection{Low-rank filters}
\label{sec:app:hypergraph:efficient:lowrank}

If $\varphi(\lambda)$ is non-zero only for the smallest $r$ eigenvalues of $\mathcal{L}$, and $r < R$, then we only need to compute the largest $r$ singular values of $\tilde{H}$. This can further speed up computations if the target rank is significantly smaller than $|E|$. Otherwise it is a valid approach to simply compute all $R$ singular values, of which only the largest $r$ are used.

\subsection{Interpretation as a smoothed graph Laplacian}
\label{sec:app:hypergraph:interpretation}

Following \cite{zhou06}, we have introduced the hypergraph Laplacian as
\begin{equation}
\label{eq:app:hypergraph:zhou}
\mathcal{L}_{\text{Hypergraph}} = I - D_V^{-1/2} H W_E D_E^{-1} H^T D_V^{-1/2}.
\end{equation}
This structure appears similar to the traditional graph Laplacian with ``adjacency matrix'' $H W_E D_E^{-1} H^T$ and degree matrix $D_V$. Indeed, we see via 
\[
\sum_{j=1}^n (H W_E D_E^{-1} H^T)_{ij} = \sum_{j=1}^n \sum_{e \in E} h_{ie} \frac{w_e}{|e|} h_{je} = \sum_{e \in E} h_{ie} \frac{w_e}{|e|} \underbrace{\sum_{j=1}^n h_{je}}_{|e|} = \sum_{e \in E} h_{ie} w_e = (D_V)_{ii}
\]
that $D_V$ holds the row sums of $H W_E D_E^{-1} H^T$.
The only aspect that differs from the graph setting is the fact that the diagonal of $H W_E D_E^{-1} H^T$ is nonzero. This means that in terms of the Laplacian, a hypergraph is identical to a traditional graph with a specific set of self-loops.
This is sometimes referred to as a \emph{clique expansion} of the hypergraph \cite{yadati19}.
We will emphasize this connection by introducing slightly different notation for the graph adjacency matrix and degree matrix,
\begin{align}
W_H &= H W_E D_E^{-1} H^T, & (W_H)_{ij} &= \sum_{e \in E} h_{ie} h_{je} \frac{w_e}{|e|}, \nonumber \\
D_H &= \diag(d_{H,1},\ \ldots,\ d_{H,n}), & d_{H,i} &= \sum_{j = 1}^n (W_H)_{ij} = \sum_{e \in E} h_{ie} w_e. \nonumber \\
\intertext{
	Note that $D_H$ and $d_{H,i}$ were previously called $D_V$ and $d_i$.
	Following Section~\ref{sec:app:gcn:smooth}, we will think of this graph with loops as a specially smoothed version of a graph without loops, i.e. $W_H = W + S_H$ with a diagonal-free adjacency matrix $W$ and a diagonal smoother $S_H$. These matrices and their corresponding graph degree matrix are given by
}
\label{eq:app:hypergraph:loopweights}
S_H &= \diag(s_{H,1},\ \ldots, s_{H,n}), & s_{H,i} &= (W_H)_{ii} = \sum_{e \in E} h_{ie} \frac{w_e}{|e|},\\
W &= W_H - S_H, & W_{ij} &= \begin{cases} \sum_{e \in E} h_{ie} h_{je} \frac{w_e}{|e|} & \text{if $i\neq j$,} \\ 0 & \text{if $i=j$,} \end{cases} \\
D &= D_H - S_H = \diag(d_1,\ \ldots,\ d_n), & d_i &= d_{H,i} - s_{H,i} = \sum_{e \in E} h_{ie} w_e \left(1 - \frac{1}{|e|}\right).
\end{align}
Inserting these definitions in \eqref{eq:app:hypergraph:zhou} indeed gives us the formula 
\[
\mathcal{L}_{\text{Hypergraph}} = I - (D+S_H)^{-1/2} (W+S_H) (D+S_H)^{-1/2} = \mathcal{L}_{S_H},
\]
which is equal to the smoothed graph Laplacian $\mathcal{L}_{S_H}$ \eqref{eq:glo:smooth}.

\subsection{Arbitrarily smoothed hypergraph Laplacians and the \emph{re-normalization trick}}
\label{sec:app:hypergraph:arbitrarysmoothers}

Note that we do not have to use this special hypergraph smoother matrix, but can instead transform the hypergraph into a graph, remove its loops to obtain the adjacency matrix $W$, and optionally apply any arbitrary smoother matrix $S$ afterwards. This might be $S=I$ as in Section~\ref{sec:app:gcn:smooth} or $S=S_H$ as above, or even a linear combination of the two. That includes the application of the \emph{re-normalization trick} from \cite{kipf17} (cf. Section~\ref{sec:app:gcn:smooth}) in the hypergraph context, as used in \cite{feng19}, resulting in the Laplacian
\begin{equation}
\label{eq:app:hypergraph:combinedsmoothing}
\begin{aligned}
\tilde{\mathcal{L}} &= I - (D_H+I)^{-1/2} (H W_E D_E^{-1} H^T + I) (D_H+I)^{-1/2} &&\\
&= I - (D+S)^{-1/2} (W+S) (D+S)^{-1/2} = \mathcal{L}_S && \text{with } S = S_H + I.
\end{aligned}
\end{equation}

Choosing an arbitrary smoother $S \neq S_H$ destroys the structure of the hypergraph Laplacian as an identity matrix minus a symmetric positive definite, possibly low-rank matrix. However, a similar structure can be exploited in the form of
\begin{equation}
\label{eq:app:hypergraph:laplaciansplitting}
\begin{aligned}
\mathcal{L}_S  &= I - (D + S)^{-1/2} (W+S)(D+S)^{-1/2} \\
&= I - (D_V + S - S_H)^{-1/2} (H W_E D_E^{-1} H^T + S - S_H) (D_V + S - S_H)^{-1/2} \\
&= \underbrace{I - (D_V + S - S_H)^{-1} (S - S_H)}_{\text{diagonal matrix}} \\
&\qquad\qquad - \underbrace{(D_V+S-S_H)^{-1/2} H W_E D_E^{-1} H^T (D_V+S-S_H)^{-1/2}}_{\text{positive semi-definite matrix with rank equal to $H$}}.
\end{aligned}
\end{equation}
If we simply set up the full matrix, the cost for a naive evaluation of matrix-vector-products with $\mathcal{L}_S$ is asymptotically $\mathcal{O}(n^2)$. By exploiting the above structure, that cost can be reduced to $\mathcal{O}(\nnz(H))$, where $\nnz$ denotes the number of non-zero entries in a matrix. This is important especially if there are distinctly fewer hyperedges than nodes, i.e. $|E| \ll n$, which is the case in our experimental settings. Then the cost can be estimated as $\mathcal{O}(n\cdot |E|)$.

In the case of $|E| \ll n$, this structure can be exploited especially in the case of affine linear filter functions. Similar to \eqref{eq:app:hypergraph:lowrankstructure}, let
\[
\mathcal{L}_S = \mathcal{D} - \mathcal{Z} \mathcal{Z}^T
\]
denote the structure of \eqref{eq:app:hypergraph:laplaciansplitting}, where the diagonal matrix $\mathcal{D} \in \mathbb{R}^{n \times n}$ and the \emph{tall-and-skinny} matrix $\mathcal{Z} \in \mathbb{R}^{n \times |E|}$ depend on the smoother $S$.
Then the kernel matrix $\mathcal{K}$ of a linear filter $\varphi(\lambda) = a_0 + a_1 \lambda$ can equally be written as a diagonal matrix minus a low-rank one via
\[
\mathcal{K} = a_0 I - a_1 \mathcal{L} = (a_0 I + a_1 \mathcal{D}) - a_1 \mathcal{Z} \mathcal{Z}^T,
\]
yielding the same computational benefits as in \eqref{eq:app:hypergraph:efficient:linear} for the hypergraph Laplacian.

When using higher-order polynomials or low-rank filter functions, however, arbitrary smoothers have a slight disadvantage over hypergraph smoothing. For polynomials, the structure of Section~\ref{sec:app:hypergraph:efficient:polynomial} cannot be retained. For example, with a quadratic filter $\varphi(\lambda) = a_0 + a_1 \lambda + a_2 \lambda^2$, we gain the structure
\[
\begin{aligned}
\mathcal{K} &= a_0 I + a_1 (\mathcal{D} - \mathcal{Z} \mathcal{Z}^T) + a_2 (\mathcal{D}^2 - \mathcal{D} \mathcal{Z}\mathcal{Z}^T - \mathcal{Z} \mathcal{Z}^T \mathcal{D} + \mathcal{Z} \mathcal{Z}^T \mathcal{Z} \mathcal{Z}^T) \\
&= \underbrace{(a_0 I + a_1 \mathcal{D} + a_2 \mathcal{D}^2)}_{\text{diagonal matrix}} + \underbrace{\begin{bmatrix} \mathcal{Z} & \mathcal{D} \mathcal{Z} \end{bmatrix} \begin{bmatrix} (a_2 \mathcal{Z}^T \mathcal{Z} - a_1 I) & -a_2 I \\ -a_2 I & 0 \end{bmatrix} \begin{bmatrix} \mathcal{Z} & \mathcal{D} \mathcal{Z} \end{bmatrix}^T,}_{\text{matrix with rank $2|E|$}}
\end{aligned}
\]
which is similar to \eqref{eq:app:hypergraph:efficient:polynomial} but with twice the rank. Analogously, the rank appearing in the expression for a polynomial of degree $p$ will be $p|E|$. There is no point in using this structure instead of simple $p$-fold multiplication with $\mathcal{L}$  in each evaluation of the layer operation. Consequently, the cost of matrix products with $\mathcal{K}$ scales with the polynomial degree.

For non-polynomial filters, recall that the eigenvalues of the hypergraph Laplacian can be computed via the singular values of a normalized incidence matrix, which is much cheaper if $|E| \ll n$. The same cannot be achieved for other smoothers $S \neq S_H$, since $\mathcal{D}$ will generally be a non-identity diagonal matrix and thus the eigenvalues of $\mathcal{L}$ and singular values of $\mathcal{Z}$ will not be connected. Most importantly, unlike Section~\ref{sec:app:hypergraph:efficient:fullrank}, there will generally not be an eigenvalue with multiplicity $n-|E|$, but possibly $n$ different eigenvalues. That being said, we can still exploit the structure of \eqref{eq:app:hypergraph:laplaciansplitting} in an iterative eigenvalue computation scheme such that we again do not have to set up the full matrix.

\section{Experiments}
\label{sec:app:experiments}

\subsection{Architecture}
\label{sec:app:experiments:architecture}

The full architecture of the GCN and L-GCN setups can be written as
\begin{equation*}
\begin{aligned}
&\text{Input:} & X^{(0)} &= X & &\in \mathbb{R}^{n \times N_0} \\
&\text{Hidden layer:} & X^{(1)} &= \sigma\Big( \mathcal{K} X^{(0)} \Theta^{(1)} \Big) &&\in \mathbb{R}^{n \times N_1} \\
&\text{Output:} & X^{(2)} &= \hphantom{\sigma\Big(} \mathcal{K} X^{(1)} \Theta^{(2)} &&\in \mathbb{R}^{n \times N_2} \\
&\text{Prediction:} & Y &= \Psi(X^{(2)}) &&\in \mathbb{R}^{n \times m}
\end{aligned}
\end{equation*}
or in a single step 
\begin{equation}
\label{eq:app:architecture}
Y = \Psi\Big(\mathcal{K} \ \sigma\big(\mathcal{K} \ X \ \Theta^{(1)}\big) \ \Theta^{(2)}\Big),
\end{equation}
where
\begin{itemize}
	\item $n$ is the number of nodes in the dataset.
	\item $N_0,N_1,N_2$ are the numbers of features in the input layer, hidden layer, and output layer, respectively. $N_0$ and $N_2$ are determined by the dataset, while $N_1$ is a free design parameter.
	\item $X$ is the input matrix. For the hypergraph datasets, we will choose $X$ as the hypergraph incidence matrix $H$, i.e. $N_0 = |E|$. The spiral dataset comes with a feature matrix.
	\item $\mathcal{K} = U \varphi(\Lambda) U^T$ is the kernel matrix of the filter function $\varphi$. For low-rank filters, the matrix is never set up directly, but kept in its factorized form $\mathcal{K} = U_r \varphi(\Lambda_r) U_r^T$.
	\item $\Theta^{(1)} \in \mathbb{R}^{N_0 \times N_1}$ and $\Theta^{(2)} \in \mathbb{R}^{N_1 \times N_2}$ are the weight matrices of the matrix. These are the parameters that are determined in the training process, see Section~\ref{sec:app:experiments:training}.
	\item $\sigma$ is the ReLU function $\sigma(x) = \max\{x,0\}$, evaluated element-wise on matrices.
	\item $\Psi$ is the softmax function applied row-wise. 
With this choice, the entries of the output matrix $Y$ contain the computed probabilities for the $i$-th node to belong to the $j$-th class, given by
	\begin{equation}
	\label{eq:app:experiments:softmaxprediction}
	Y_{ij} = \frac{\exp(X^{(2)}_{ij})}{\sum_{k=1}^m \exp(X^{(2)}_{ik})}.
	\end{equation}
	for $i=1,\ldots,n$ and $j=1,\ldots,m$.
	The number of columns in $X^{(2)}$ must be equal to the number of classes, i.e., $N_2 = m$.
\end{itemize}
For the R-GCN setup, the architecture  can similarly be written as
\begin{equation*}
\begin{aligned}
&\text{Input:} & X^{(0)} &= X & &\in \mathbb{R}^{n \times N_0} \\
&\text{Reduction:} & \hat{X}^{(0)} &= U_r^T X^{(0)} &&\in \mathbb{R}^{r \times N_0} \\
&\text{Hidden layer:} & \hat{X}^{(1)} &= \sigma\Big( \varphi(\Lambda_r) \hat{X}^{(0)} \Theta^{(1)} \Big) &&\in \mathbb{R}^{r \times N_1} \\
&\text{Output:} & \hat{X}^{(2)} &= \hphantom{\sigma\Big(} \varphi(\Lambda_r) \hat{X}^{(1)} \Theta^{(2)} &&\in \mathbb{R}^{r \times N_2} \\
&\text{Projection:} & X^{(2)} &= U_r \hat{X}^{(2)} &&\in \mathbb{R}^{n \times N_2} \\
&\text{Prediction:} & Y &= \Psi(X^{(2)}) &&\in \mathbb{R}^{n \times m}
\end{aligned}
\end{equation*}
or in a single step
\begin{equation}
\label{eq:app:reducedarchitecture}
Y = \Psi\Big(U_r \ \varphi(\Lambda_r) \ \sigma\big(\varphi(\Lambda_r) \ U_r^T \ X \ \Theta^{(1)}\big) \ \Theta^{(2)}\Big).
\end{equation}

\subsection{Eigenvalue computation}

At several points in our method, we need to compute eigeninformation of matrices. We will here describe the different computational methods.

\subsubsection{Computing only the smallest eigenvalues}
\label{sec:app:experiments:eigenvalues:which}

For low-rank filters, we always need to compute the dominant $r$ eigenvalues with respect to $\varphi$, i.e., those eigenvalues where the expression $|\varphi(\lambda)|$  is largest. Since all our filters follow the third design principle, these dominant values will always be found among the smallest eigenvalues.

Unfortunately, numerical methods like Krylov subspace iterations are designed to find those eigenvalues with the \emph{largest absolute} value. To compute the smallest ones, these methods are usually implicitly applied to the inverse matrix $\mathcal{L}^{-1}$, solving a system of equations in each iteration. We can avoid this additional complexity by observing that all eigenvalues of $\mathcal{L}$ are contained in the interval $[0,2)$, so our desired values can be obtained more efficiently by computing the largest eigenvalues of $2I - \mathcal{L}$ with a Krylov method. Applying a large enough smoother matrix sometimes decreases the known eigenvalue bound.For example, hypergraph smoothing produces the hypergraph Laplacian with all eigenvalues in $[0,1]$, so we can compute the largest eigenvalues of $I-\mathcal{L}$ instead.

For the low-rank \emph{linear} and \emph{quadratic} filters, we simply need to compute the $r$ smallest eigenvalues of the Laplacian matrix. For the \emph{pseudoinverse} filter, on the other hand, the first eigenvalue $\lambda_1=0$ has a value of $\varphi(\lambda_1) = 0$, so the dominant eigenvalues are $\lambda_2$ through $\lambda_{r+1}$ as described in Section~\ref{sec:app:gcn:filters}. We do this by computing the $r+1$ smallest eigenvalues and then discarding the first one.

\subsubsection{Matrices without exploitable structure}
\label{sec:app:experiments:eigenvalues:nostructure}

In some cases like sparse graphs, hypergraphs with a large number of hyperedges, or smoothed hypergraph Laplacian as in Section~\ref{sec:app:hypergraph:arbitrarysmoothers}, the best approach is to simply set up the full matrix $2I - \mathcal{L}$ and employ a standard method. Arguably the most frequently used algorithm is the implicitly restarted Arnoldi method \cite{lehoucq96}. It computes the largest eigenvalues of a matrix by projection onto a small Krylov subspace. For symmetric matrices, the algorithm reduces to the Lanczos method.

A high performance implementation of the implicitly restarted Arnoldi method is available the \textsc{Arpack}\footnote{\url{https://www.caam.rice.edu/software/ARPACK/}} \cite{arpack98} library, written in \textsc{Fortran}, which is the de-facto standard numerical method for eigenvalue computation. Wrappers exist for various programming languages including \textsc{Matlab} (\texttt{eigs}) and \textsc{Python} (\texttt{scipy.sparse.linalg.eigs}). Our \textsc{Julia} implementation uses the \texttt{eigs} function from the \texttt{Arpack.jl} package.\footnote{\url{https://www.github.com/JuliaLinearAlgebra/Arpack.jl}}

Since Krylov methods only require matrix-vector products with the system matrix, we may provide a way to evaluate these products more efficiently. This is the case for the smoothed hypergraph Laplacians, as described in Section~\ref{sec:app:hypergraph:arbitrarysmoothers}. However, we did not exploit this fact in our implementation.

\subsubsection{Matrices based on a Gaussian kernel}
\label{sec:app:experiments:eigenvalues:gaussian}

For the \emph{spiral} dataset from Section~\ref{sec:app:experiments:spiral}, each data point comes with a feature vector $x_i \in \mathbb{R}^d$ and the adjacency matrix is given by 
\begin{equation}
\label{eq:app:gaussianadjacency}
W_{ij} = \begin{cases} \exp\left(\frac{-\|x_i-x_j\|^2}{\sigma^2}\right) & \text{if $i\neq j$,} \\ 0 & \text{else,} \end{cases}
\end{equation}
with a shape parameter $\sigma \in \mathbb{R}$.
Due to this special structure, we can use the method proposed in \cite{alfke18}. Due to the special structure of the adjacency matrix, matrix-vector products with $\mathcal{L}$ can be approximated efficiently and with a high degree of accuracy using a Non-equispaced Fast Fourier Transform (NFFT) \cite{alfke18}. This implementation can then be used in a Krylov-based eigenvalue algorithm like above, since these methods do not require the full matrix. For our experiments, we precompute the eigenvalues using the \texttt{fastsumAdjacencyEigs} function from the \textsc{Matlab} code\footnote{\url{https://www.tu-chemnitz.de/mathematik/wire/people/files_alfke/NFFT-Lanczos-Example-v1.tar.gz}} published with \cite{alfke18}. We summarize the arguments in Table~\ref{tbl:app:nfftarguments}. Default values were used for options not listed. The resulting computation time was 0.43 seconds.

\begin{table}
	\caption{Arguments for eigenvalue computation with the \texttt{fastsumAdjacencyEigs} function}
	\label{tbl:app:nfftarguments}
	\centering
	\begin{tabular}{@{}llp{0.6\textwidth}@{}}
		\toprule
		Parameter name & Value & Description \\
		\midrule
		\texttt{data} & $X$ & The feature matrix of size $n \times 3$ holding the three-dimensional data points $x_i$ in its rows. This is the same matrix used for the network input. See Section~\ref{sec:app:experiments:spiral} for details. \\[0.2em]
		\texttt{nev} & 11 & Number of eigenvalues computed. Since we use rank-10 filters, we need at most the smallest 11 eigenvalues of $\mathcal{L}$ as described in Section~\ref{sec:app:experiments:eigenvalues:which}. \\[0.2em]
		\texttt{opts.sigma} & $3.5$ & Shape parameter of the Gaussian kernel as in \eqref{eq:app:gaussianadjacency}, which must be chosen with respect to the dataset. We follow \cite{alfke18} in this choice. \\[0.2em]
		\texttt{opts.diagonalEntry} & 0 & Value for the diagonal of the adjacency matrix. Setting this option to 1 yields identity smoothing as in Section~\ref{sec:app:gcn:smooth}. \\[0.2em]
		\verb|opts.eigs_tol| & $10^{-3}$ & Target accuracy of the eigenvalue computation with \texttt{eigs}. Inaccurate results are acceptable due to the approximative nature of the NFFT method, following \cite{alfke18}. \\[0.2em]
		\texttt{opts.N} & 32 & \multirow{4}{*}{Technical parameters of the NFFT method, following \cite{alfke18}.} \\
		\texttt{opts.m} & 4 & \\
		\texttt{opts.p} & 1 & \\
		\verb|opts.eps_B| & 0 & \\
		\bottomrule
	\end{tabular}
\end{table}

Alternatively, eigenvalues can of course be computed as in Section~\ref{sec:app:experiments:eigenvalues:nostructure} by setting up the full matrix. Testing that approach in \textsc{Julia} yielded setup times of approximately 4.3 seconds.

\subsubsection{Hypergraph Laplacian in case of few hyperedges}
\label{sec:app:experiments:eigenvalues:hypergraph}

As described in Section~\ref{sec:app:hypergraph:efficient}, the eigenvalues of the hypergraph Laplacian (i.e., with hypergraph smoothing) can be computed via the nonzero singular values of a matrix $\tilde{H} \in \mathbb{R}^{n \times |E|}$. 

For full-rank filters as in Section~\ref{sec:app:hypergraph:efficient:fullrank}, we use the \texttt{svd} function from \text{Julia}'s standard \texttt{LinearAlgebra} module, which calls \textsc{Lapack}'s\footnote{\url{http://www.netlib.org/lapack/}} \texttt{gesdd!} function.

For low-rank filters as in Section~\ref{sec:app:hypergraph:efficient:lowrank}, we compute the $r$ or $r+1$ largest singular values of $\tilde{H}$ using the \texttt{svds} function from the \texttt{Arpack} package mentioned above, which in turn computes the $r$ largest eigenvalues of $\tilde{H}^T \tilde{H}$ with its \texttt{eigs} function.

\subsection{Training process}
\label{sec:app:experiments:training}

There are several details to be observed when specifying our training process.
\begin{itemize}
	
	\item \textbf{Run count:}
	All results published in our paper are averages over a number of \emph{runs}. For almost all architectures, the run count was 100. The only exceptions are the full-rank architectures with the naive implementation due to their training times of 15 to 40 minutes, where we only used 20 runs.
	
	\item \textbf{\textsc{TensorFlow} model:}
	For each run, a new network of tensors and operations was set up, representing the GCN architecture using the \textsc{TensorFlow.jl} module in \textsc{Julia}. Note that it would also have been possible to build the model once before the first one and then use new sessions for each run.

	\item \textbf{Weight initialization:}
	At the start of each run, the weight variables $\Theta^{(l)} \in \mathbb{R}^{N_{l-1}\times N_l}$ ($l=1,2$) are initialized randomly. All matrix entries are determined independently from a uniform distribution on the interval
	\[ \left[-\sqrt{\frac{6}{N_{l-1}+N_l}},\; \sqrt{\frac{6}{N_{l-1}+N_l}}\ \right]. \]
	This strategy was introduced as \emph{normalized initialization} in \cite{glorot10} and is popularly known as the uniform Glorot initializer.
	In our implementation, we run this variable assignment in our main \textsc{Julia} code instead of using \textsc{TensorFlow}'s initializer methods. This is done in order to have the randomness produced by \textsc{Julia}'s random number generator, for which we provide a fixed seed before each experiment.
	
	\item \textbf{Dropout:} Contrary to, e.g., \cite{kipf17}, we do not use any dropout because it did not improve results for us.
	
	\item \textbf{Training sets:}
	For each dataset, we used the same set of training nodes for all runs of all experiments. The same procedure is also used, e.g., in \cite{kipf17} for the popular citation networks datasets, which come with a predetermined training set and label rate. Since our datasets did not include a training set, we produced one ourselves as follows. First, we ran experiments with a new random training set for each run, which produced an expectedly high accuracy variance. We then fixed a training set containing the same number of nodes from each class and we tested whether it roughly produced the same average results as we had noticed before. We included this step to make sure that we did not obtain any exaggerated results. However, note that the ``quality'' of the training data does not favor any particular architecture, and our paper only contains comparisons of results produced with the same training set. Also, in real-world applications, the training data is rarely generated randomly, but often created manually with the goal to obtain the best results.
	
	The training sets used for each datasets are given in Sections~\ref{sec:app:experiments:spiral}, \ref{sec:app:experiments:cars}, and \ref{sec:app:experiments:mushrooms}, respectively, and they are also included in the datasets published with our experiment codes.
	
	\item \textbf{Loss function:}
	The weight variables $\Theta^{(l)}$ are trained to minimize a loss function. We follow the standard approach for classification problems  and use the cross entropy function, is the average negative logarithm of the computed probability for a training node to belong to its correct class. We further add a small regularization term punishing large entries in the first layer weight $\Theta^{(1)}$ via the squared Frobenius norm. Let $c_i \in \{1,\ldots,m\}$ denote the correct class of the $i$-th node ($i=1,\ldots,n$) and $\mathcal{I} \subset \{1,\ldots,n\}$ denote the training set. Then the full loss function is given by
	\[
	L(\Theta^{(1)}, \Theta^{(2)}) = -\frac{1}{|\mathcal{I}|} \sum_{i \in \mathcal{I}} \log \left( Y_{i,c_i} \right) + \frac{\rho}{2} \|\Theta^{(1)}\|_{\mathrm{F}}^2,
	\]
	where $Y$ depends on the weights in terms of \eqref{eq:app:architecture} or \eqref{eq:app:reducedarchitecture}.
	For the regularization factor $\rho$, we use the value $0.0005$ in all experiments. This loss function is equal to the one used in \cite{kipf17}.
	
	In \textsc{TensorFlow}, the combined softmax \eqref{eq:app:experiments:softmaxprediction} and cross entropy function can be computed via the \verb|softmax_cross_entropy_with_logits| method, and the half squared Frobenius norm is implemented in the \verb|l2_loss| method.
	
	\item \textbf{Optimization method:}
	In each run, we use a fixed number of 1000 iterations of the gradient descent method with a fixed learning rate (step length) of $0.2$. This choice is contrary to \cite{kipf17}, where a maximum of 200 iterations of the Adam optimizer from \cite{adam} were used with a learning rate of 0.01 and an additional early stopping strategy. However, we obtained better results with the gradient descent scheme. Also note that Adam was introduced as a method for stochastic optimization, while our loss function does not have any random elements. 
	This is due to the lack of dropout as well as the SSL setting, where the input is always the full matrix $X$ and no training batches are used.
	
	\item \textbf{Accuracy evaluation:}
	After each run, the number of correctly classified non-training nodes is determined. Node $i$ counts as correctly classified if the $i$-th row of $Y$ (or equivalently $X^{(2)}$) has its largest value in the $c_i$-th column, where $c_i \in \{1,\ldots,m\}$ is the correct class of that node. Nodes from the training set are neglected. The ratio of correctly classified nodes among all non-training nodes is reported as the accuracy of that run. The average accuracy of all runs is reported in the results tables of our paper.
	
	\item \textbf{Runtime partitioning:}
	For timing, we identity two main steps in the training process. The \emph{setup time} measures the computation of the system matrices for the layer operations, which includes any eigenvalue computation. The \emph{training time} measures all iterations of the optimization scheme. Not included are the initial load of the \textsc{TensorFlow} library and the dataset, the setup of the \textsc{TensorFlow} model, and the final evaluation.
	
\end{itemize}

\subsection{Spiral dataset}
\label{sec:app:experiments:spiral}

For the spiral dataset, we follow the data generation strategy used in \cite{alfke18}. First, the \textsc{Matlab} function \texttt{generateSpiralDataWithLabels.m}\footnote{The link from \cite{alfke18} is no longer available, but the file was also published, e.g., at \url{http://www.codelooker.com/id/180/1117266.html} as part of experiments with the \textsc{Matlab} interface of the \texttt{libsvm} library for support vector machines.} is called with the parameters listed in Table~\ref{tbl:app:spiralgeneration}. The code generates five three-dimensional orb centers based on the parameters. For each center, 2000 points are generated following a multivariate normal distribution with expected value set to the orb center coordinates. For the covariance matrix, the identity is used. Originally, the true label of each data point was set to the index of the orb center for which it was generated. However, since the orbs intersect, this may lead to nodes that are impossible to classify correctly. Thus we follow \cite{alfke18} in setting the true label of each point to the index of the orb center closest to it in the Euclidean norm. A description of the dataset is given in Table~\ref{tbl:app:spiraldescription}. A smaller version of the dataset with only 400 points per orb is visualized in Figure~\ref{fig:app:spiraldataset}, similar to \cite[Figure 2a]{alfke18}.

\begin{table}
	\caption{Parameters for spiral dataset generation}
	\label{tbl:app:spiralgeneration}
	\centering
	\begin{tabular}{@{}llp{0.6\textwidth}@{}}
		\toprule
		Parameter name & Value & Description \\
		\midrule
		\texttt{Nc} & 5 & Number of orbs and classes \\
		\texttt{Ns} & 2000 & Number of data points per orb \\
		\texttt{h} & 10 & The $z$-coordinates of the five orb centers lie equidistantly between 0 and \texttt{h} \\
		\texttt{r} & 2 & The $x$-$y$-coordinates of the five orb centers lie on a circle with radius \texttt{r} \\
		\bottomrule & 
	\end{tabular}
\end{table}

\begin{table}
	\caption{Description of the spiral dataset}
	\label{tbl:app:spiraldescription}
	\centering
	\begin{tabular}{@{}cS[table-format=3.2]S[table-format=3.2]S[table-format=3.2]ccc@{}}
		\toprule
		Orb & \multicolumn{3}{c}{Center coordinates} & Number of points & \multicolumn{2}{c}{Indices of} \\
		index & $x$ & $y$ & $z$ & with that label & \multicolumn{2}{c}{training nodes} \\
		\midrule
		1 & 2.0 & 0.0 & 0.0 & 1972 & 1736 & 1869 \\
		2 & 0.6180 & 1.9021 & 2.5 & 2026 & 2949 & 3785 \\
		3 & -1.6180 & 1.1756 & 5.0 & 1999 & 4187 & 4206 \\
		4 & -1.6180 & -1.1756 & 7.5 & 1981 & 6532 & 7186  \\
		5 & 0.6180 & -1.9021 & 10.0 & 2022 & 8296 & 9961 \\
		\bottomrule
	\end{tabular}
\end{table}

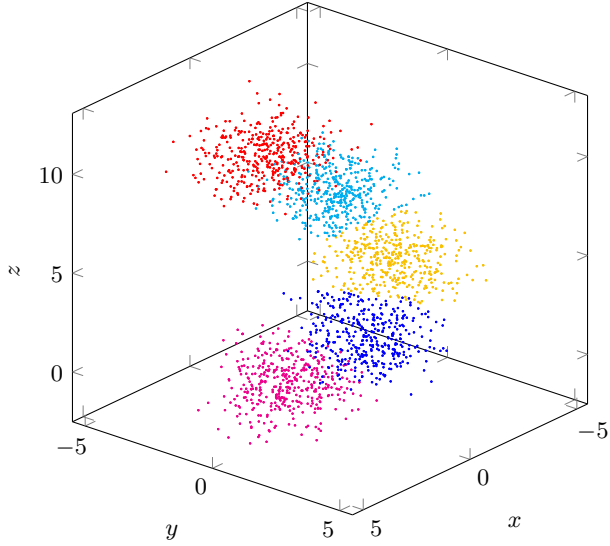
\begin{figure}
	\centering
	\begin{tikzpicture}[baseline]
	\begin{axis}[
	view={130}{25},
	enlargelimits=false,
	font=\footnotesize,
	width=0.6\textwidth, height=0.6\textwidth,
scatter/classes={
		1={mark=*,mark size=0.25,magenta,style={solid, fill=magenta}},
		2={mark=*,mark size=0.25,blue,style={solid, fill=blue}},
		3={mark=*,mark size=0.25,draw=amber,style={solid, fill=green}},
		4={mark=*,mark size=0.25,draw=cyan,style={solid, fill=cyan}},
		5={mark=*,mark size=0.25,draw=red,style={solid, fill=red}}
	},
	xlabel={$x$}, ylabel={$y$}, zlabel={$z$},
	xmin=-5.5, xmax=5.5,
	ymin=-5.5, ymax=5.5,
	]
	\addplot3[scatter,only marks,scatter src=explicit symbolic] file {spiral_2000.txt};
	\end{axis} 
	\end{tikzpicture}
	\caption{Example of the spiral dataset with $n=2000$ points}
	\label{fig:app:spiraldataset}
\end{figure}

The layer widths are $N_0=3$, $N_1=4$, and $N_2=5$, leading to $N_0 N_1 = 12$ weights in the first layer matrix and $N_1 N_2 = 20$ weights in the second layer matrix.

Eigenvalues are computed as described in Section~\ref{sec:app:experiments:eigenvalues:gaussian}, which took approximately 0.39 seconds for the first 10 eigenvalues, or 0.44 seconds for the first 11 eigenvalues.

\subsection{Cars dataset}
\label{sec:app:experiments:cars}

The cars dataset is based on six general properties of cars. Three of these (buying price, maintenance price, and number of doors) have four possible discrete values each, while the other three properties (number of persons it can carry, luggage boot size, and safety) have three possible discrete values each. The data points are the $n=4\cdot4\cdot4\cdot3\cdot3\cdot3=1728$ possible property value combinations. For each of these, one of $m=4$ acceptability classes was assigned manually. As described in our paper, a hypergraph is constructed from these properties by creating hyperedges containing all nodes for which a certain property has a certain value, e.g., all car designs with exactly four doors, or with a small luggage boot. This way, $|E|=4+4+4+3+3+3=21$ hyperedges are created, each with an assigned weight of $w_e=1$. The incidence matrix $H$ of this hypergraph is used as the GCN input matrix $X$.

Due to the symmetric nature of the dataset, all hyperedges connect either $n/3=576$ or $n/4=432$ nodes. Each node is included in exactly six hyperedges, three of which have $|e|=576$ and $|e|=432$, respectively. As a result, the node degree matrix is $D_H = 6 I$ and the loop weight matrix 
\eqref{eq:app:hypergraph:loopweights}
is $S_H = s I$ with $s = \frac{3}{576}+\frac{3}{432} \approx 0.01215$. Hence hypergraph smoothing is identical to identity smoothing with a very small factor and even any linear combinations of those, like the re-normalization trick for hypergraphs \eqref{eq:app:hypergraph:combinedsmoothing}, can just be written as $S=\alpha I$.
We can hence simplify expression \eqref{eq:app:hypergraph:laplaciansplitting} to
\[
\mathcal{L}_{\alpha I} = \frac{1}{6 + \alpha - s} \left( 6 I - H D_E^{-1} H^T \right),
\]
where the second part is positive semidefinite with rank $|E| = 21$. The largest eigenvalue of $\mathcal{L}_{\alpha I}$ is always $\frac{6}{6+\alpha-s}$ with a multiplicity of $n-|E| = 1707$.

This shows that all of our smoothing approaches only lead to a scaled graph Laplacian, and that the efficient evaluation techniques from Section~\ref{sec:app:hypergraph:efficient} can also be used for differently smoothed Laplacians. Using a smoother is equivalent to changing coefficients in the filter function. That being said, it is still preferable to use the hypergraph Laplacian because our filter functions are designed for a largest eigenvalue of 1, which is achieved with $\alpha=s$.

The layer widths are $N_0=21$, $N_1=8$, and $N_2=4$, leading to $N_0 N_1 = 168$ weights in the first layer matrix and $N_1 N_2 = 32$ weights in the second layer matrix.

As described in Section~\ref{sec:app:experiments:training}, we used a fixed training set of 20 nodes with the indices given below. For each class, there are five training nodes.
\begin{align*}
26&& 227&& 232&& 251&& 310&& 501&& 648&& 688&& 718&& 863 \\
882&& 1073&& 1075&& 1167&& 1291&& 1388&& 1494&& 1554&& 1621&& 1705
\end{align*}

\subsection{Mushrooms dataset}
\label{sec:app:experiments:mushrooms}

The mushrooms dataset contains $n=8124$ sample descriptions of mushroom species, described by 22 discrete properties, of which we discard one due to missing values. For the other 21 properties, we apply the same procedure as for the cars dataset, producing $|E|=112$ hyperedges, all with a weight of $w_e=1$. The goal is to decide whether the mushrooms are edible or inedible, where the latter class contains poisonous mushrooms as well as those of unknown edibility and where eating is not recommended. The dataset was also used, e.g., in \cite{zhou06} and \cite{yadati19}.

The layer widths are $N_0=112$, $N_1=16$, and $N_2=2$, leading to $N_0 N_1 = 1792$ weights in the first layer matrix and $N_1 N_2 = 32$ weights in the second layer matrix.

As described in Section~\ref{sec:app:experiments:training}, we used a fixed training set of 20 nodes with the indices given below. There are exactly 10 edible and 10 inedible training samples.
\begin{align*}
224&& 610&& 939&& 1430&& 1743&& 2442&& 2559&& 3129&& 4268&& 4286 \\
4354&& 4713&& 5602&& 5615&& 5845&& 6434&& 6486&& 6744&& 7515&& 7954
\end{align*}

\subsection{Results for the mushrooms dataset with other ranks}
\label{sec:app:experiments:mushroomsrankcomparison}

In this section, we report the results of an investigation into the effect that the target rank $r$ has on the performance of low-rank kernels. We focus on the pseudoinverse filter function in the L-GCN and R-GCN architectures with the hypergraph Laplacian for the mushrooms dataset. All parameters are the same as in the experiments for our main paper, except for the kernel matrix rank. We test different ranks $r\in \{10,12,15,20,25,30,40,50,70,100\}$. Results are visualized in Figure~\ref{fig:app:rankcomparison}.

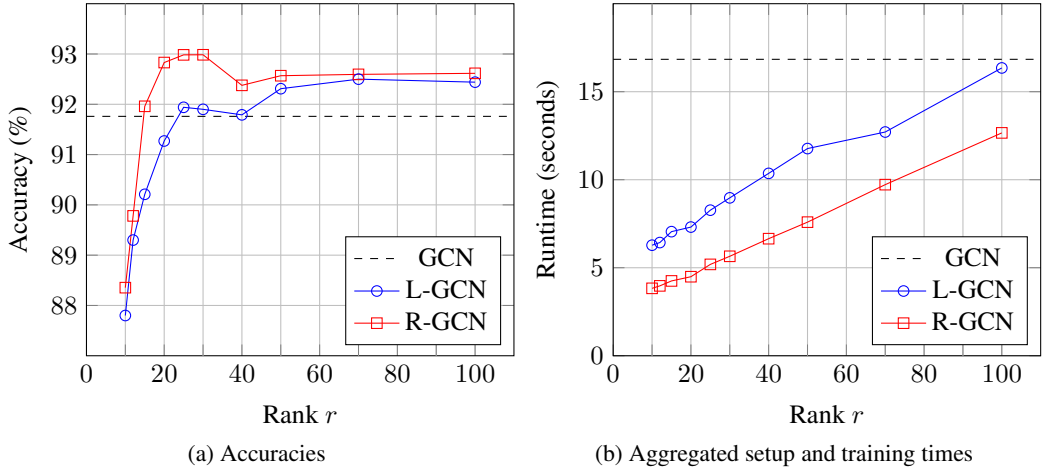
\begin{figure}
	\subfloat[Accuracies]{
		\begin{tikzpicture}
		\begin{axis}[
		width = 0.52\textwidth,
		xlabel={Rank $r$}, ylabel={Accuracy (\%)},
		xtick={0,20,...,100},
		extra x ticks = {10,30,...,90},
		extra x tick labels = {},
ytick={88,...,93},
grid=major,
		xmin=0, xmax=110,
ymin=87, ymax=94,
		ylabel near ticks,
		legend pos=south east,
		]
		\addplot[black,dashed,domain=0:110] {91.76};
		\addlegendentry{GCN}
		
		\addplot[blue, mark=o] coordinates {(10, 87.8) (12, 89.3) (15, 90.21) (20, 91.27) (25, 91.94) (30, 91.9) (40, 91.79) (50, 92.31) (70, 92.5) (100, 92.44)};
		\addlegendentry{L-GCN}
		
		\addplot[red, mark=square] coordinates {(10, 88.3539) (12, 89.7808) (15, 91.9600) (20, 92.83) (25, 92.9827) (30, 92.9844) (40, 92.3769) (50, 92.5682) (70, 92.5949) (100, 92.6157)};
		\addlegendentry{R-GCN}
		
		\end{axis}
		\end{tikzpicture}
	}
\hfill
	\subfloat[Aggregated setup and training times]{
		\begin{tikzpicture}
		\begin{axis}[
		width = 0.52\textwidth,
		xlabel={Rank $r$}, ylabel={Runtime (seconds)},
		xtick={0,20,...,100},
		extra x ticks = {10,30,...,90},
		extra x tick labels = {},
ytick={0,5,...,15},
		grid=major,
		xmin=0, xmax=110,
		ymin=0, ymax=20,
		ylabel near ticks,
		legend pos=south east,
		]
		\addplot[black,dashed,domain=0:110] {16.84};
		\addlegendentry{GCN}
		
		\addplot[blue, mark=o] coordinates {(10, 6.28) (12, 6.43) (15, 7.05) (20, 7.31) (25, 8.27) (30, 8.97) (40, 10.36) (50, 11.77) (70, 12.71) (100, 16.35)};
		\addlegendentry{L-GCN}
		
		\addplot[red, mark=square] coordinates {(10, 3.83) (12, 3.97) (15, 4.25) (20, 4.49) (25, 5.19) (30, 5.65) (40, 6.65) (50, 7.59) (70, 9.72) (100, 12.66)};
		\addlegendentry{R-GCN}
		
		\end{axis}
		\end{tikzpicture}
	}
	\caption{Performance development of the low-rank and reduced-order architectures over different ranks}
	\label{fig:app:rankcomparison}
\end{figure}

Even with a target rank of $10$, we gain an accuracy of around 88\% with both architectures, which is worse than the other results but might still be acceptable. Starting from a rank of $20$, accuracy results change by at most 0.5\%. This is most likely due to the fact that the eigenvectors to the 30 smallest eigenvalues contain the best clustering information. Adding more eigenvectors does not necessarily add new beneficial information, and increasing the rank above 30 even seems to be detrimental. The reduced-order architecture yields its best results around a rank of 25, while the low-rank architecture gains additional benefits from a rank of 70. The runtimes develop approximately in an affine linear fashion, as could be expected.

All in all, even though there is no clear best rank to use, a rank of 25 seems to be slightly better than the rank we used in our main paper, $r=20$. However, in real-world applications, an in-depth rank comparison is rarely suitable, so the target rank has to be chosen beforehand and this evaluation supports that $r=20$ is a reasonable choice.

\subsection{Results for the mushrooms dataset with other smoothers}
\label{sec:app:experiments:smoothedmushrooms}

We will now give a quick overview over the results obtained with different smoothing approaches, as described in Section~\ref{sec:app:hypergraph:arbitrarysmoothers}. We will restrict our experiments to the mushrooms dataset and two different architectures: the traditional GCN with the linear filter, evaluated as in Section~\ref{sec:app:hypergraph:efficient:linear}, and the reduced-order GCN with the pseudoinverse filter of rank 20. We compare four different smoothing strategies for the matrix $S$ in \eqref{eq:app:hypergraph:laplaciansplitting}:
\begin{itemize}
	\item \textbf{No smoothing} ($S=0$). This strategy means that we transform the hypergraph into a graph without loops, following Section~\ref{sec:app:hypergraph:interpretation}, and then use the traditional Laplacian of that graph.
	\item \textbf{Identity smoothing} ($S=I$). Adding the identity matrix as a smoother is equivalent to using the non-smoothed Laplacian and then employing the re-normalization trick in the layer operation, as seen in Section~\ref{sec:app:gcn:smooth}.
	\item \textbf{Hypergraph smoothing} ($S=S_H$). As described in Section~\ref{sec:app:hypergraph:interpretation}, choosing the smoother matrix from \eqref{eq:app:hypergraph:loopweights} retrieves the original hypergraph Laplacian. This is the only smoothing approach that allows for the efficient eigenvalue computation and convolution formulas given in Section~\ref{sec:app:hypergraph:efficient}, and it is the one used in our main paper.
	\item \textbf{Combined hypergraph and identity smoothing} ($S=S_H+I$). This strategy is equivalent to using the hypergraph Laplacian and additionally employing the re-normalization trick, as described in \eqref{eq:app:hypergraph:combinedsmoothing} and used in \cite{feng19}.
\end{itemize}

\begin{table}
	\caption{Results for the mushrooms dataset with R-GCN architecture, pseudoinverse filter, and different types of smoothing}
	\label{tbl:mushrooms:smoothercomparison}
	\centering
	\begin{tabular}{@{}llcccc@{}}
		\toprule
		\multirow{2}{*}{Network} & \multirow{2}{*}{Filter function} & \multirow{2}{*}{Smoother} & \multicolumn{2}{c}{Time} & \multirow{2}{*}{Accuracy} \\ \cmidrule(lr){4-5}
		&&& Setup & Training & \\
		\midrule
		\multirow{4}{4em}{GCN (efficient)} & \multirow{4}{*}{Linear} & None & \multirow{4}{*}{$<$0.01 s} & 17.51 s & 88.54 \% \\
		&& Identity &  & 17.54 s & 89.26 \% \\
		&& Hypergraph & & 16.73 s & 88.82 \% \\
		&& Combined &  & 17.57 s & \textbf{89.31} \% \\
		\midrule
		\multirow{4}{4em}{R-GCN (rank 20)} & \multirow{4}{*}{Pseudoinverse} & None & 2.31 s & 4.35 s & 91.85 \% \\
		&& Identity & 2.35 s & 4.39 s & 91.68 \% \\
		&& Hypergraph & 0.05 s & 4.44 s & \textbf{92.83} \% \\
		&& Combined & 1.80 s & 4.58 s & 91.72 \% \\
		\bottomrule
	\end{tabular}
\end{table}

Results are given in Table~\ref{tbl:mushrooms:smoothercomparison}. For the full-rank linear architecture, setup is always fast, and the re-normalization trick slightly improves accuracy, making the strategy from \cite{feng19} the best choice. For the reduced-order pseudoinverse architecture, the hypergraph Laplacian yields a drastically reduced setup time as well as the best accuracy, making it an obvious choice for this architecture. In different scenarios, different smoothers might yield the best accuracy, but hypergraph smoothing always gains faster setup for all but the full-rank linear filters.

\bibliographystyle{abbrvnat}
\bibliography{lowrankgcn-refs}

\end{document}